\documentclass{article} \usepackage{matus}

\def\baru{\bar{u}}
\def\cR{\mathcal{R}}
\def\nR{\nabla\cR}
\def\wnn{w_{\mathrm{prod}}}
\def\llog{\ell_{\log}}
\def\lexp{\ell_{\exp}}
\def\tr{\mathrm{tr}}

\title{Gradient descent aligns the layers of deep linear networks}

\author{Ziwei Ji\qquad Matus Telgarsky\\
\tt{\{ziweiji2,mjt\}@illinois.edu}\\
University of Illinois, Urbana-Champaign}
\date{}

\begin{document}

\maketitle

\begin{abstract}
This paper establishes risk convergence and
asymptotic weight matrix alignment
---
  a form of implicit regularization
---
of gradient flow and gradient descent when applied to deep linear networks
on linearly separable data.
In more detail, for gradient flow applied to strictly decreasing
loss functions (with similar results for gradient descent with
particular decreasing step sizes):
(i) the risk converges to $0$;
(ii) the normalized $i^\text{th}$ weight matrix asymptotically equals its
rank-$1$ approximation $u_iv_i^\top$;
(iii) these rank-$1$ matrices are
  aligned across layers, meaning $|v_{i+1}^\top u_i|\to1$.
  In the case of the logistic loss (binary cross entropy), more
  can be said: the linear function induced by the network ---
  the product of its weight matrices ---
  converges to the same direction as the maximum margin solution.
  This last property was identified in prior work,
  but only under assumptions on gradient descent which
  here are implied by the alignment phenomenon.
\end{abstract}

\section{Introduction}

Efforts to explain the effectiveness of gradient descent in deep learning
have uncovered an exciting possibility: it not only finds solutions with low error,
but also biases the search for low complexity solutions which generalize well
\citep{rethinking,spectrally_normalized,nati_iclr,nati_nips}.

This paper analyzes the implicit regularization of gradient descent
and gradient flow on deep linear networks and linearly separable data.
For strictly decreasing losses, the optimum is at infinity,
and
we establish various \emph{alignment phenomena}:
\begin{itemize}
  \item
    For each weight matrix $W_i$, the corresponding normalized weight matrix
    $\nicefrac{W_i}{\|W_i\|_F}$
    asymptotically equals its rank-$1$ approximation $u_iv_i^\top$,
    where the Frobenius norm $\|W_i\|_F$ satisfies $\|W_i\|_F\to\infty$.
    In other words,
    $\nicefrac{\|W_i\|_2 }{ \|W_i\|_F } \to 1$,
    and asymptotically only the rank-$1$ approximation of $W_i$
    contributes to the final predictor,
    a form of implicit regularization.

  \item
    Adjacent rank-$1$ weight matrix approximations
    are aligned:
    $|v_{i+1}^\top{}u_i| \to 1$.

  \item
    For the logistic loss, the first right singular vector $v_1$ of $W_1$ is aligned with the data, meaning $v_1$ converges
    to the unique maximum margin predictor $\bar u$ defined by the data.
    Moreover, the linear predictor induced by the network, $\wnn := W_L\cdots W_1$, is also aligned with the
    data, meaning $\nicefrac{\wnn }{ \|\wnn\| }\to \bar u$.
\end{itemize}

Simultaneously, this work proves that the risk is globally optimized: it asymptotes to 0.
Alignment and risk convergence are proved simultaneously;
the phenomena are coupled within the proofs.

Since the layers align, they can be viewed as a \emph{minimum norm solution}:
they do not ``waste norm'' on components which are killed off when the layers are
multiplied together.
Said another way, given data $((x_i,y_i))_{i=1}^n$,
the normalized matrices $(\nicefrac {W_1}{\|W_1\|_F}, \ldots, \nicefrac{W_L}{\|W_L\|_F})$
asymptotically solve a maximum margin problem which demands \emph{all} weight matrices be small, not merely their product:
  \[
    \max_{\substack{W_L \in \R^{1\times d_{L-1}}\\ \|W_L\|_F = 1}}
    \cdots
    \max_{\substack{W_1 \in \R^{d_1\times d_0}\\ \|W_1\|_F = 1}}
    \quad
    \min_i
    y_i(W_L \cdots W_1)x_i.
  \]

\begin{figure}[h!]
  \centering
  \begin{subfigure}[b]{0.495\textwidth}
    \includegraphics[width = 1.0\textwidth]{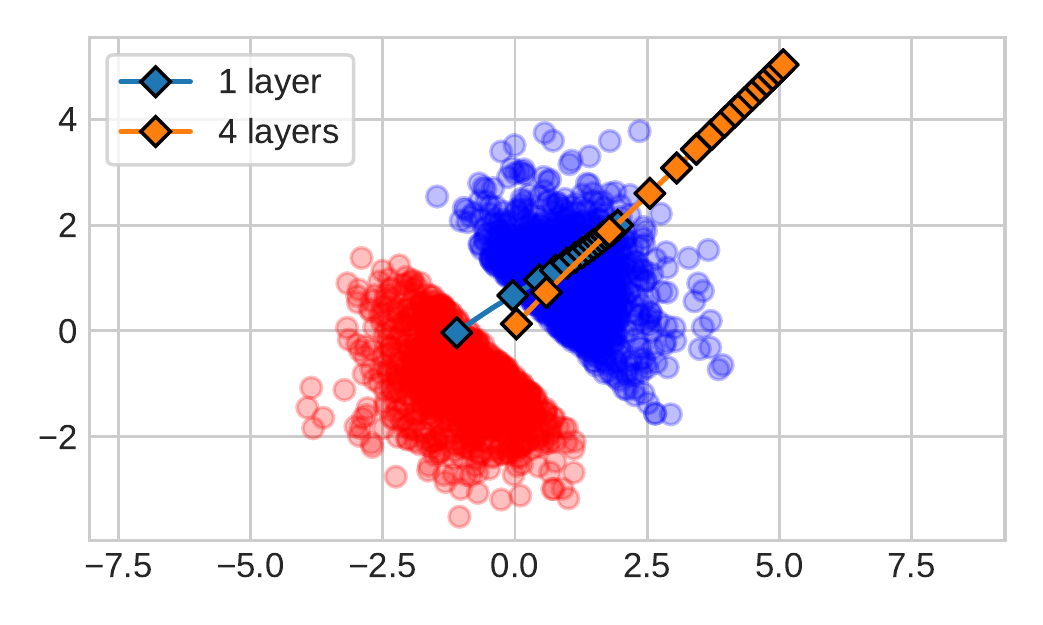}
    \caption{Margin maximization.\label{fig:maxmarg}}
  \end{subfigure}
  \begin{subfigure}[b]{0.495\textwidth}
    \includegraphics[width = 1.0\textwidth]{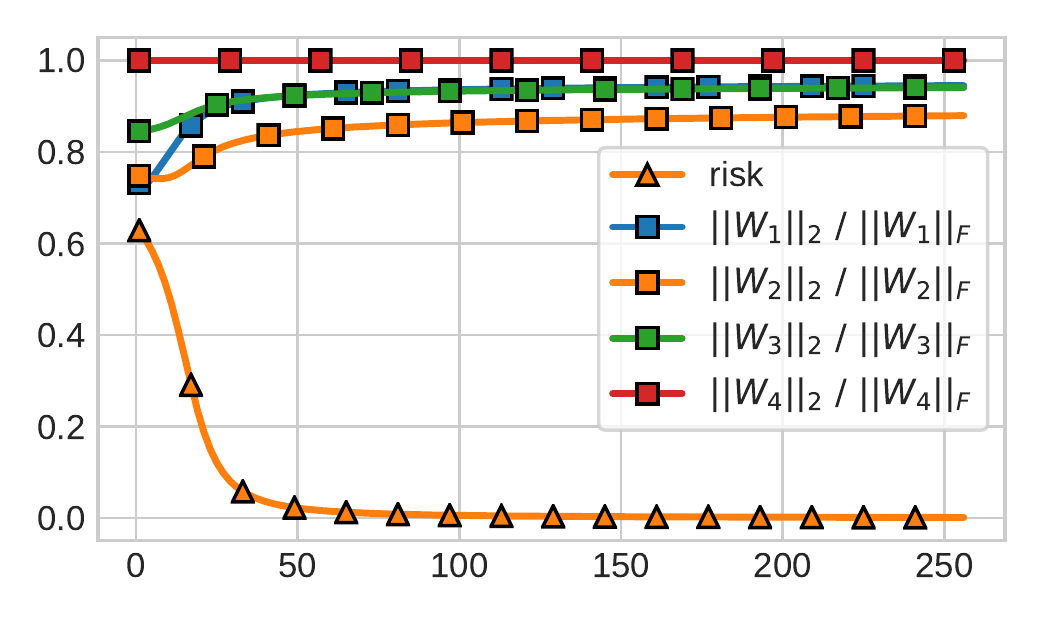}
    \caption{Alignment and risk minimization.\label{fig:align}}
  \end{subfigure}
  \caption{Visualization of margin maximization and self-regularization of layers on synthetic data with a $4$-layer linear network compared to a $1$-layer network (a linear predictor).
  \Cref{fig:maxmarg} shows the convergence of $1$-layer and $4$-layer networks to the same
  margin-maximizing linear predictor on positive (blue) and negative (red) separable data.
  \Cref{fig:align} shows the convergence of $\|W_i\|_2/\|W_i\|_F$ to 1 on each layer,
  plotted against the risk.
}
  \label{fig:intro:margin_dists}
\end{figure}

The paper is organized as follows.
This introduction continues with related work, notation, and assumptions
in \Cref{sec:related,sec:notation}.
The analysis of gradient flow is in \Cref{sec:gf},
and gradient descent is analyzed in \Cref{sec:gd}.
The paper closes with future directions in \Cref{sec:future};
a particular highlight is a preliminary experiment
on CIFAR-10 which establishes empirically that a form of the alignment phenomenon occurs on
the standard nonlinear network AlexNet.

\begin{figure}[h!]
  \centering
  \begin{subfigure}[b]{0.495\textwidth}
    \includegraphics[width = 1.0\textwidth]{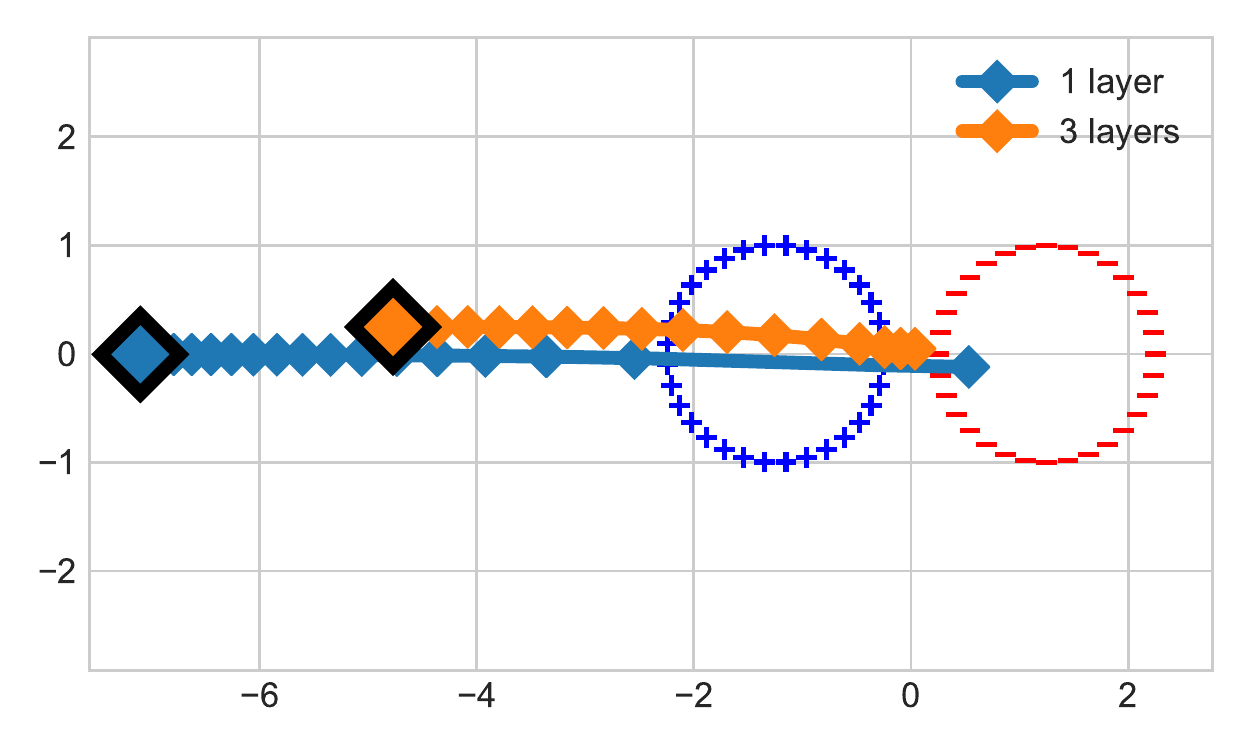}
    \caption{Overall margin maximization.\label{fig:il}}
  \end{subfigure}
  \begin{subfigure}[b]{0.495\textwidth}
    \includegraphics[width = 1.0\textwidth]{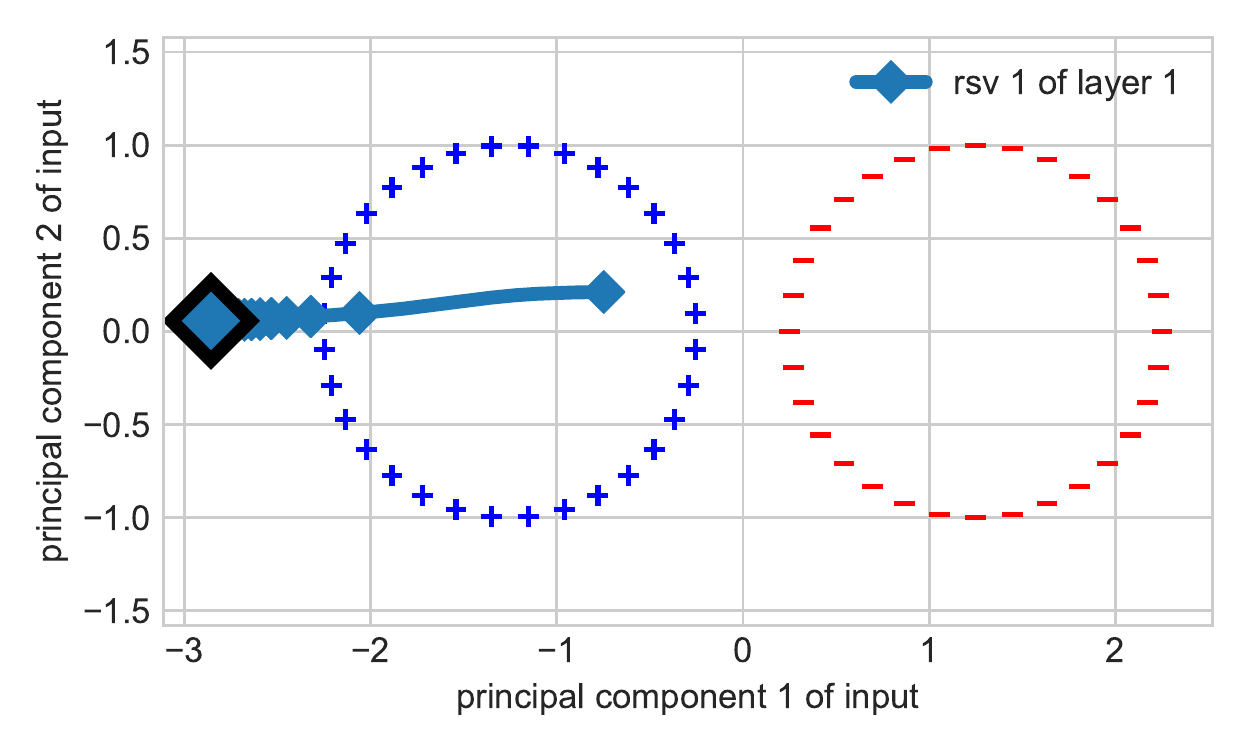}
    \caption{Margin maximization at layer 1.\label{fig:il:0}}
  \end{subfigure}
  \begin{subfigure}[b]{0.495\textwidth}
    \includegraphics[width = 1.0\textwidth]{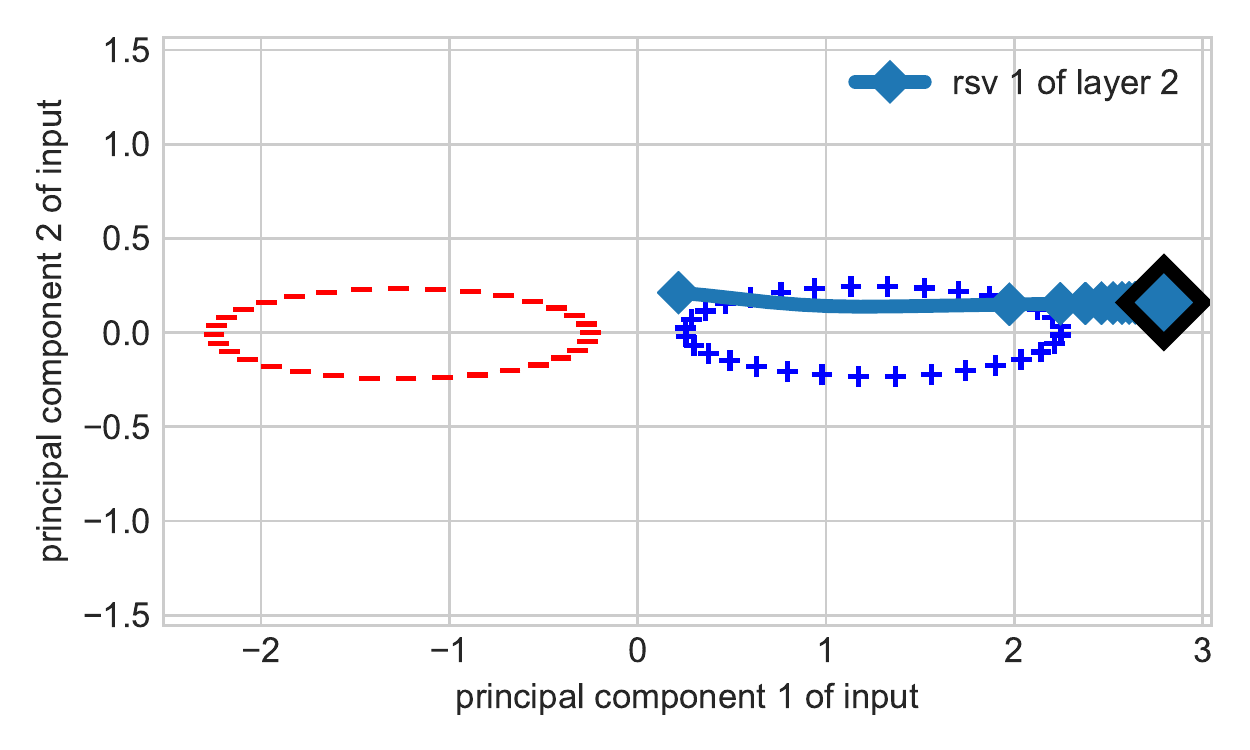}
    \caption{Margin maximization at layer 2.\label{fig:il:1}}
  \end{subfigure}
  \begin{subfigure}[b]{0.495\textwidth}
    \includegraphics[width = 1.0\textwidth]{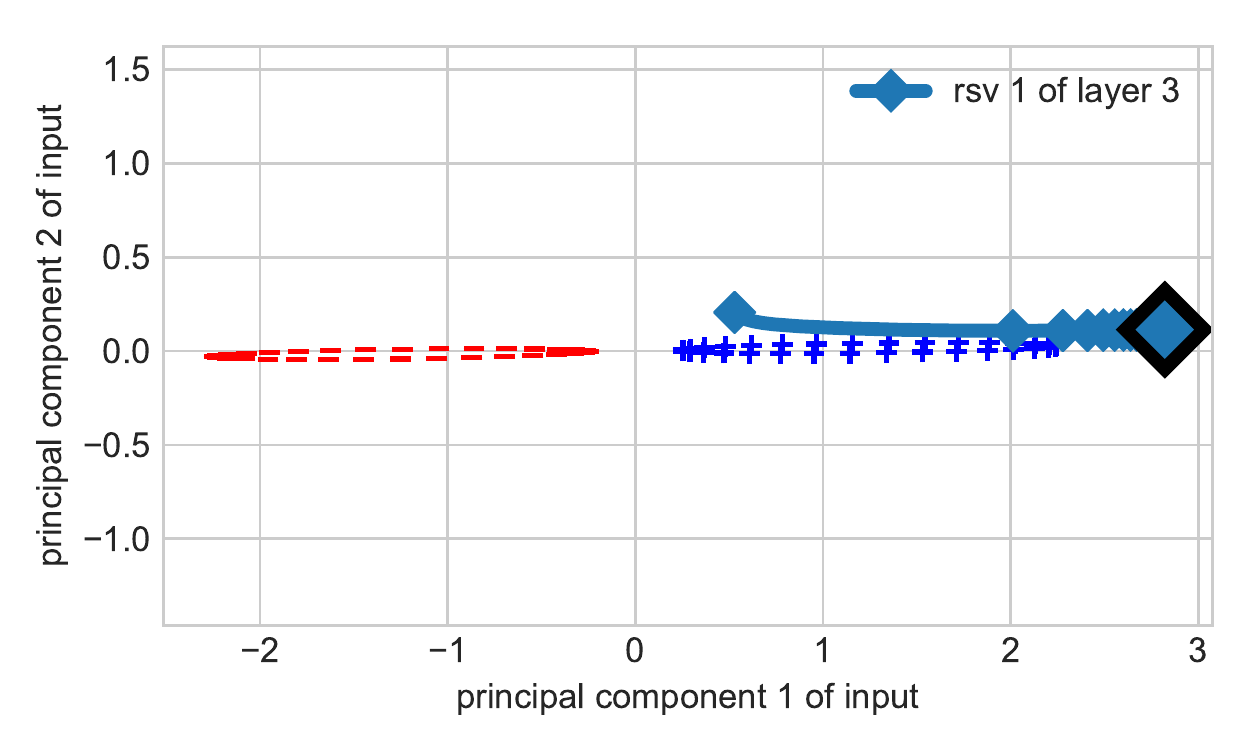}
    \caption{Margin maximization at layer 3.\label{fig:il:2}}
  \end{subfigure}
  \caption{A visualization of inter-layer alignment on data consisting
    of two well-separated circles with a 3-layer linear network.
    \Cref{fig:il} depicts, as in \Cref{fig:maxmarg},
    that optimizing 1- and 3-layer linear networks finds the same maximum margin solution.
    The other three plots show the data as it is mapped through progressively more and more layers.
    Due to alignment, the product $W_i\cdots{}W_1$ becomes $u_i\baru^\top$, where $u_i$ is the top left
    singular vector of $W_i$, which means that asymptotically the mapped data will be well
    separated and lie along the span of $u_i$, as depicted by the flattening in \Cref{fig:il:0,fig:il:1,fig:il:2}.
    Additionally, these three subfigures show that the top right singular vector $v_{i+1}$ of the
    subsequent layer is aligned with this $u_i$, which in these plots (with principal component axes)
    corresponds to following a horizontal line.
  }
  \label{fig:interlayer_align}
\end{figure}

\subsection{Related work}
\label{sec:related}

On the implicit regularization of gradient descent, \citet{nati_iclr} show that for linear predictors and linearly separable data, the gradient descent iterates converge to the same direction as the maximum margin solution. \citet{ours} further characterize such an implicit bias for general nonseparable data. \citet{nati_nips} consider gradient descent on fully connected linear networks and linear convolutional networks. In particular, for the exponential loss, assuming the risk is minimized to $0$ and the gradients converge in direction, they show that the whole network converges in direction to the maximum margin solution. These two assumptions are on the gradient descent process itself, and specifically the second one might be hard to interpret and justify. Compared with \citet{nati_nips}, this paper \emph{proves} that the risk converges to $0$ and the weight matrices align; moreover the proof here proves the properties simultaneously, rather than assuming one and deriving the other. Lastly, \citet{arora_icml} show for deep linear networks (and later \citet{jason_nips} for ReLU networks) that gradient flow does not change the difference between squared Frobenius norms of any two layers.  We use a few of these tools in our proofs; please see \Cref{sec:gf,sec:gd} for details.

For a smooth (nonconvex) function, \citet{jason_gd} show that any strict saddle can be avoided almost surely with small step sizes. If there are only countably many saddle points and they are all strict, and if gradient descent iterates converge, then this implies (almost surely) they converge to a local minimum. In the present work, since there is no finite local minimum, gradient descent will go to infinity and never converge, and thus these results of \citet{jason_gd} do not show that the risk converges to $0$.

There has been a rich literature on linear networks. \citet{saxe} analyze the learning dynamics of deep linear networks, showing that they exhibit some learning patterns similar to nonlinear networks, such as a long plateau followed by a rapid risk drop. \citet{arora_icml} show that depth can help accelerate optimization. On the landscape properties of deep linear networks, \citet{kawaguchi,laurent_icml} show that under various structural assumptions, all local optima are global. \citet{zhou_iclr} give a necessary and sufficient characterization of critical points for deep linear networks.

\subsection{Notation, setting, and assumptions}
\label{sec:notation}

Consider a data set $\{(x_i,y_i)\}_{i=1}^n$, where $x_i\in \mathbb{R}^d$, $\|x_i\|\le1$, and $y_i\in\{-1,+1\}$. The data set is assumed to be linearly separable, i.e., there exists a unit vector $u$ which correctly classifies every data point: for any $1\le i\le n$, $y_i\langle u,x_i\rangle>0$. Furthermore, let $\gamma:=\max_{\|u\|=1}\min_{1\le i\le n}y_i\langle u,x_i\rangle>0$ denote the maximum margin, and $\baru := \argmax_{\|u\|=1}\min_{1\le i\le n}y_i\langle u,x_i\rangle$ denote the maximum margin solution (the solution to the hard-margin SVM).

A linear network of depth $L$ is parameterized by weight matrices $W_L,\ldots,W_1$, where $W_k\in \mathbb{R}^{d_k\times d_{k-1}}$, $d_0=d$, and $d_L=1$. Let $W=(W_L,\ldots,W_1)$ denote all parameters of the network. The (empirical) risk induced by the network is given by
\begin{equation*}
    \cR(W)=\cR\del{W_L,\ldots,W_1}=\frac{1}{n}\sum_{i=1}^{n}\ell\del{y_iW_L\cdots W_1x_i}=\frac{1}{n}\sum_{i=1}^{n}\ell\del{\langle\wnn,z_i\rangle},
\end{equation*}
where $\wnn := (W_L\cdots W_1)^{\top}$, and $z_i:=y_ix_i$.

The loss $\ell$ is assumed to be continuously differentiable, unbounded, and strictly decreasing to $0$. Examples include the exponential loss $\lexp(x)=e^{-x}$ and the logistic loss $\llog(x)=\ln\del{1+e^{-x}}$.
\begin{assumption}\label{ass:loss}
    $\ell'<0$ is continuous, $\lim_{x\to-\infty}\ell(x)=\infty$ and $\lim_{x\to\infty}\ell(x)=0$.
\end{assumption}

This paper considers gradient flow and gradient descent,
where gradient flow $\cbr{W(t)\middle|t\ge0,t\in \mathbb{R}}$ can be interpreted as gradient descent with infinitesimal step sizes. It starts from some $W(0)$ at $t=0$, and proceeds as
\begin{equation*}
    \frac{\dif W(t)}{\dif t}=-\nR\del{W(t)}.
\end{equation*}
By contrast, gradient descent $\cbr{W(t)\middle|t\ge0,t\in \mathbb{Z}}$ is a discrete-time process given by
\begin{equation*}
    W(t+1)=W(t)-\eta_t\nR\del{W(t)},
\end{equation*}
where $\eta_t$ is the step size at time $t$.

We assume that the initialization of the network is not a critical point and induces a risk no larger than the risk of the trivial linear predictor $0$.
\begin{assumption}\label{ass:init}
    The initialization $W(0)$ satisfies $\nR\del{W(0)}\ne0$ and $\cR\del{W(0)}\le\cR(0)=\ell(0)$.
\end{assumption}
It is natural to require that the initialization is not a critical point, since otherwise gradient flow/descent will never make a progress. The requirement $\cR\del{W(0)}\le\cR(0)$ can be easily satisfied, for example, by making $W_1(0)=0$ and $W_L(0)\cdots W_2(0)\ne0$. On the other hand, if $\cR\del{W(0)}>\cR(0)$, gradient flow/descent may never minimize the risk to $0$. Proofs of those claims are given in \Cref{sec:notation_app}.

\section{Results for gradient flow}\label{sec:gf}

In this section, we consider gradient flow. Although impractical when compared with gradient descent, gradient flow can simplify the analysis and highlight proof ideas. For convenience, we usually use $W$, $W_k$, and $\wnn$, but they all change with (the continuous time) $t$. Only proof sketches are given here; detailed proofs are deferred to \Cref{sec:gf_app}.

\subsection{Risk convergence}

One key property of gradient flow is that it never increases the risk:
\begin{equation}\label{eq:gf_risk_no_inc}
    \frac{\dif\cR(W)}{\dif t}=\left\langle\nR(W),\frac{\dif W}{\dif t}\right\rangle=-\|\nR(W)\|^2=-\sum_{k=1}^{L}\enVert{\frac{\partial\cR}{\partial W_k}}_F^2\le0.
\end{equation}
We now state the main result: under \Cref{ass:loss,ass:init}, gradient flow minimizes the risk, $W_k$ and $\wnn$ all go to infinity,
and the alignment phenomenon occurs.
\begin{theorem}\label{thm:gf_risk}
    Under \Cref{ass:loss,ass:init}, gradient flow iterates satisfy the following properties:
    \begin{itemize}
        \item $\lim_{t\to\infty}\cR(W)=0$.
        \item For any $1\le k\le L$, $\lim_{t\to\infty}\|W_k\|_F=\infty$.
        \item For any $1\le k\le L$, letting $(u_k,v_k)$ denote the first left and right singular vectors of $W_k$,
        \begin{equation*}
            \lim_{t\to\infty}\enVert{\frac{W_k}{\|W_k\|_F}-u_kv_k^{\top}}_F=0.
        \end{equation*}
        Moreover, for any $1\le k<L$, $\lim_{t\to\infty}\envert{\langle v_{k+1},u_k\rangle}=1$. As a result,
        \begin{equation*}
            \lim_{t\to\infty}\envert{\left\langle \frac{\wnn}{\prod_{k=1}^L\|W_k\|_F},v_1\right\rangle}=1,
        \end{equation*}
        and thus $\lim_{t\to\infty}\|\wnn\|=\infty$.
    \end{itemize}
\end{theorem}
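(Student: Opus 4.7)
The plan is to prove the four conclusions as a single coupled argument, anchored on two invariants. The first is monotonicity $\tfrac{\dif \cR}{\dif t} = -\|\nR\|^2 \le 0$ from \eqref{eq:gf_risk_no_inc}, which gives a limit $\cR^\ast := \lim_t \cR(W(t)) \ge 0$ and integrability $\int_0^\infty \|\nR\|^2\,\dif t < \infty$. The second is the balancedness identity of \citet{arora_icml} that $\|W_{k+1}\|_F^2 - \|W_k\|_F^2$ is conserved, which propagates norm growth across layers. The key structural computation is $\partial\cR/\partial W_k = b_k a_k^\top$ with $b_k := (W_L\cdots W_{k+1})^\top$ and $a_k := (W_{k-1}\cdots W_1)\nabla_{\wnn}\cR$; each layer's gradient is therefore rank-1, and balancedness drops out by checking $a_{k+1}^\top W_{k+1}^\top b_{k+1} = a_k^\top W_k^\top b_k$ via the identities $a_{k+1} = W_k a_k$ and $W_{k+1}^\top b_{k+1} = b_k$.

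Next I would prove $\|W_k\|_F \to \infty$ and $\cR^\ast = 0$ jointly. Set $a(t) := -\tfrac{1}{n}\sum_i \ell'(\langle \wnn, z_i\rangle) > 0$; linear separability and $\ell' < 0$ give $\|\nabla_{\wnn}\cR\| \ge \langle -\nabla_{\wnn}\cR, \baru\rangle \ge \gamma a$. Suppose for contradiction that the weight norms remain bounded: since $\cR(W(t)) \le \ell(0)$, Assumption~\ref{ass:loss} keeps $a(t) \ge \alpha > 0$, because otherwise $a \to 0$ along a subsequence would force each $\langle \wnn, z_i\rangle \to \infty$, contradicting bounded norms. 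Combined with the rank-1 form of $\partial\cR/\partial W_k$, this yields a uniform lower bound $\|\nR\| \ge c > 0$, contradicting integrability. Hence some, and by balancedness every, $\|W_k\|_F \to \infty$. For $\cR^\ast = 0$, the $\baru$-projection estimate integrated over time shows $\langle \wnn, \baru\rangle$ grows without bound; coupled with the alignment of $\wnn$ with $\baru$ (established in the next step), every $\langle \wnn, z_i\rangle \to \infty$, forcing $\cR \to 0$ by Assumption~\ref{ass:loss}.

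For the alignment conclusions, the rank-1 structure $\partial\cR/\partial W_k = b_k a_k^\top$ means $W_k$ grows under rank-1 increments. A recursive argument on $k$ (top layer down) shows that once the layers $W_{k+1},\dots,W_L$ are close to rank-1 with leading right singular vector $v_{k+1}$, $b_k = (W_L\cdots W_{k+1})^\top$ concentrates along $v_{k+1}$, so the rank-1 increment pushes the leading left singular direction $u_k$ of $W_k$ toward $\pm v_{k+1}$, yielding both $W_k/\|W_k\|_F \to u_k v_k^\top$ and $|v_{k+1}^\top u_k| \to 1$. The recursion is initiated at the top, where $W_L \in \R^{1\times d_{L-1}}$ is automatically rank-1. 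Finally, $\wnn/\prod_k \|W_k\|_F$ telescopes through the aligned rank-1 factors to $\pm v_1$ (the inner products $v_{k+1}^\top u_k$ each collapse to $\pm 1$ and the scalar $u_L \in \R$ contributes only a sign), and $\|\wnn\|\to\infty$ since each $\|W_k\|_F\to\infty$. The main obstacle is the circular dependence between $\cR\to 0$, norm divergence, and recursive alignment; I expect to break it by using $a(t)$ as the pivot linking risk decay to norm growth, and by making the rank-1 concentration quantitative via the dynamics of $\sigma_{k,1}^2/\|W_k\|_F^2$ driven by the rank-1 gradient update.
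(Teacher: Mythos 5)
There are two genuine gaps. First, in your bounded-norms contradiction, the step ``combined with the rank-1 form of $\partial\cR/\partial W_k$, this yields a uniform lower bound $\|\nR\|\ge c>0$'' does not follow. Even with all layers bounded and $\|\nR(\wnn)\|\ge\gamma\alpha$, the parameter gradient can be arbitrarily small or zero, because $\enVert{\partial\cR/\partial W_1}_F=\|W_L\cdots W_2\|\,\|\nR(\wnn)\|$ and the upper-layer product may vanish; e.g.\ $W=0$ is a critical point with $\cR=\ell(0)$. Excluding such near-saddle configurations is exactly where \Cref{ass:init} must enter: since $\nR(W(0))\ne0$, the risk for $t\ge1$ is strictly below $\cR(W(0))\le\ell(0)$, hence $\|\wnn\|$, and thus $\|W_L\cdots W_2\|\ge\|\wnn\|/R$ on $B(R)$, is bounded below --- this is the content of the paper's \Cref{lem:gf_unbounded}, and your sketch never uses non-criticality of the initialization. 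Relatedly, your contradiction gives only unboundedness of $\max_k\|W_k\|_F$, not $\|W_k\|_F\to\infty$, which in the paper is deduced only after $\cR\to0$ is established.

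Second, your route to $\cR^\ast=0$ and to alignment does not go through in this setting. You want $\langle\wnn,\baru\rangle\to\infty$ from an integrated projection estimate, but $\wnn$ is not the flow variable: $\dot\wnn=-A(t)\nR(\wnn)$ for a time-varying positive semidefinite $A(t)$ built from layer products whose smallest eigenvalue is not a priori bounded below, so the projection need not grow. Moreover you then invoke ``alignment of $\wnn$ with $\baru$,'' which is \Cref{thm:gf_min_norm}, valid only for $\lexp,\llog$ under \Cref{ass:data}; \Cref{thm:gf_risk} covers all losses satisfying \Cref{ass:loss}, so this step is out of scope and circular. Finally, the layer-alignment plan --- accumulating rank-1 increments $b_ka_k^\top$ to push $u_k$ toward $v_{k+1}$ --- must contend with the fact that $v_{k+1}(t)$ is never shown to converge (it may rotate), so the accumulated integral need not become asymptotically rank one; you derive the differential identity $W_{k+1}^\top\,\partial\cR/\partial W_{k+1}=(\partial\cR/\partial W_k)W_k^\top$ but never use its integrated matrix form \cref{eq:gf_lnn_spec}, which is precisely how the paper obtains the quantitative, dynamics-free bounds $\|W_k\|_F^2-\|W_k\|_2^2\le D$ and $\langle v_{k+1},u_k\rangle^2\ge1-O(1)/\|W_{k+1}\|_2^2$ by propagating rank-one-ness down from the row vector $W_L$, and then closes the loop: if $\cR\ge\epsilon$ forever, this alignment makes $\|W_L\cdots W_2\|$ large once all norms exceed a threshold, so $\int\enVert{\partial\cR/\partial W_1}_F^2\dif t=\infty$, contradicting $\int\|\nR\|^2\dif t\le\cR(W(0))$.
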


\Cref{thm:gf_risk} is proved using two lemmas, which may be of independent interest. To show the ideas, let us first introduce a little more notation. Recall that $\cR(W)$ denotes the empirical risk induced by the deep linear network $W$. Abusing the notation a little, for any linear predictor $w\in \mathbb{R}^d$, we also use $\cR(w)$ to denote the risk induced by $w$. With this notation, $\cR(W)=\cR(\wnn)$, while
\begin{equation*}
    \nR(\wnn)=\frac{1}{n}\sum_{i=1}^{n}\ell'\del{\langle\wnn,z_i\rangle}z_i=\frac{1}{n}\sum_{i=1}^{n}\ell'\del{W_L\cdots W_1z_i}z_i
\end{equation*}
is in $\mathbb{R}^{d}$ and different from $\nR(W)$, which has $\sum_{k=1}^{L}d_{k}d_{k-1}$ entries, as given below:
\begin{equation*}
    \frac{\partial\cR}{\partial W_k}=W_{k+1}^{\top}\cdots W_L^{\top}\nR(\wnn)^{\top}W_1^{\top}\cdots W_{k-1}^{\top}.
\end{equation*}
Furthermore, for any $R>0$, let
\begin{equation*}
    B(R)=\cbr{W\middle|\max_{1\le k\le L}\|W_k\|_F\le R}.
\end{equation*}

The first lemma shows that for any $R>0$, the time spent by gradient flow in $B(R)$ is finite.
\begin{lemma}\label{lem:gf_unbounded}
    Under \Cref{ass:loss} and \ref{ass:init}, for any $R>0$, there exists a constant $\epsilon(R)>0$, such that for any $t\ge1$ and any $W\in B(R)$, $\|\partial\cR/\partial W_1\|_F\ge\epsilon(R)$. As a result, gradient flow spends a finite amount of time in $B(R)$ for any $R>0$, and $\max_{1\le k\le L}\|W_k\|_F$ is unbounded.
\end{lemma}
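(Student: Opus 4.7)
The plan is to exploit the rank-one structure of $\partial\cR/\partial W_1$. Writing $\alpha := W_L\cdots W_2\in\R^{1\times d_1}$, the chain-rule formula stated above simplifies to
\[
\frac{\partial\cR}{\partial W_1} \;=\; \alpha^\top\,\nR(\wnn)^\top,
\]
whose Frobenius norm factors as $\|\alpha\|\cdot\|\nR(\wnn)\|$. I will show that both factors are bounded away from zero on a suitable compact set and conclude by continuity and compactness. The main difficulty is that $\|\alpha\| = \|W_L\cdots W_2\|$ has no obvious direct lower bound in $B(R)$; my resolution is that the risk constraint rules out $\alpha=0$ pointwise, and then compactness upgrades this to a quantitative bound.

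As a preliminary I establish $\cR(W(1)) < \ell(0)$: by \Cref{ass:init}, $\nR(W(0))\neq 0$, so continuity keeps $\|\nR\|^2$ positive throughout some initial interval $[0,\delta]$, and \eqref{eq:gf_risk_no_inc} forces the risk to strictly decrease there, giving $\cR(W(1))\le\cR(W(\delta))<\cR(W(0))\le\ell(0)$. Next I define the compact set
\[
S(R) \;:=\; \cbr{W : \max_k\|W_k\|_F\le R,\ \cR(W)\le\cR(W(1))}.
\]
Monotonicity of the risk from \eqref{eq:gf_risk_no_inc} ensures that any $W(t)\in B(R)$ with $t\ge 1$ lies in $S(R)$, so it suffices to lower-bound $\|\partial\cR/\partial W_1\|_F$ on $S(R)$.

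I claim $\|\partial\cR/\partial W_1\|_F>0$ on all of $S(R)$. Suppose it vanished at some $W\in S(R)$. Linear separability gives
\[
\langle\baru,\nR(\wnn)\rangle \;=\; \frac{1}{n}\sum_i \ell'\del{\langle\wnn,z_i\rangle}\langle\baru,z_i\rangle \;\le\; -\gamma\cdot\frac{1}{n}\sum_i\envert{\ell'\del{\langle\wnn,z_i\rangle}} \;<\; 0,
\]
using $\ell'<0$ and $\langle\baru,z_i\rangle\ge\gamma$, so $\nR(\wnn)\neq 0$. The rank-one form then forces $\alpha=0$, whence $\wnn = W_1^\top\alpha^\top = 0$ and $\cR(W) = \ell(0)$, contradicting $\cR(W)\le\cR(W(1))<\ell(0)$. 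Continuity of $W\mapsto\|\partial\cR/\partial W_1\|_F$ together with compactness of $S(R)$ then yield a positive minimum $\epsilon(R) := \min_{W\in S(R)}\|\partial\cR/\partial W_1\|_F$.

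To finish, $\|\partial\cR/\partial W_1\|_F^2\le\|\nR(W)\|^2=-d\cR/dt$, so integrating over $\cbr{t\ge 1:W(t)\in B(R)}$ bounds its Lebesgue measure by $\cR(W(1))/\epsilon(R)^2<\infty$. Hence $W(t)$ lies outside $B(R)$ on a set of infinite measure in $t$, giving $\limsup_t\max_k\|W_k(t)\|_F\ge R$ for every $R$---i.e., unboundedness. The only delicate ingredient is the strict decrease $\cR(W(1))<\ell(0)$; the rest is compactness combined with the rank-one structure of $\partial\cR/\partial W_1$ and separability.
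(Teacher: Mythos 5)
Your proof is correct and follows essentially the same route as the paper's: it exploits the rank-one factorization $\|\partial\cR/\partial W_1\|_F=\|W_L\cdots W_2\|\,\|\nR(\wnn)\|$, uses separability to rule out $\nR(\wnn)=0$, uses \Cref{ass:init} plus the monotone risk \cref{eq:gf_risk_no_inc} to keep $\cR(W(t))\le\cR(W(1))<\ell(0)$ for $t\ge1$ (which forbids $\wnn=0$), and then integrates $\|\nR(W)\|^2$ to bound the time in $B(R)$. The only cosmetic difference is that you obtain the uniform bound $\epsilon(R)>0$ by compactness of the sublevel set $S(R)$ and the extreme value theorem, whereas the paper makes the same step quantitative, lower-bounding $\|\nR(\wnn)\|\ge M\gamma$ with $-M=\max_{-R^L\le x\le R^L}\ell'(x)$ and deriving an explicit contradiction as the gradient norm tends to zero.
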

Here is the proof sketch. If $\|W_k\|_F$ are bounded, then $\|\nR(\wnn)\|$ will be lower bounded by a positive constant, therefore if $\|\partial\cR/\partial W_1\|_F=\|W_L\cdots W_2\|\|\nR(\wnn)\|$ can be arbitrarily small, then $\|W_L\cdots W_2\|$ and $\|\wnn\|$ can also be arbitrarily small, and thus $\cR(W)$ can be arbitrarily close to $\cR(0)$. This cannot happen after $t=1$, otherwise it will contradict \Cref{ass:init} and \cref{eq:gf_risk_no_inc}.

To proceed, we need the following properties of linear networks from prior work \citep{arora_icml,jason_nips}. For any time $t\ge0$ and any $1\le k<L$,
\begin{equation}\label{eq:gf_lnn_spec}
    W_{k+1}^{\top}(t)W_{k+1}(t)-W_{k+1}^{\top}(0)W_{k+1}(0)=W_k(t)W_k^{\top}(t)-W_k(0)W_k^{\top}(0).
\end{equation}
To see this, just notice that
\begin{equation*}
    W_{k+1}^{\top}\frac{\partial\cR}{\partial W_{k+1}}=W_{k+1}^{\top}\cdots W_L^{\top}\nR(\wnn)^{\top}W_1^{\top}\cdots W_k^{\top}=\frac{\partial\cR}{\partial W_k}W_k^{\top}.
\end{equation*}
Taking the trace on both sides of \cref{eq:gf_lnn_spec}, we have
\begin{equation}\label{eq:gf_lnn_norm}
    \enVert{W_{k+1}(t)}_F^2-\enVert{W_{k+1}(0)}_F^2=\enVert{W_k(t)}_F^2-\enVert{W_k(0)}_F^2.
\end{equation}
In other words, the difference between the squares of Frobenius norms of any two layers remains a constant. Together with \Cref{lem:gf_unbounded}, it implies that all $\|W_k\|_F$ are unbounded.

However, even if $\|W_k\|_F$ are large, it does not follow necessarily that $\|\wnn\|$ is also large. \Cref{lem:gf_align} shows that this is indeed true: for gradient flow, as $\|W_k\|_F$ get larger, adjacent layers also get more aligned to each other, which ensures that their product also has a large norm.

For $1\le k\le L$, let $\sigma_k$, $u_k$, and $v_k$ denote the first singular value (the $2$-norm), the first left singular vector, and the first right singular vector of $W_k$, respectively. Furthermore,
define
\begin{align*}
    D &:=\del{\max_{1\le k\le L}\|W_k(0)\|_F^2}-\|W_L(0)\|_F^2+\sum_{k=1}^{L-1}\enVert{W_k(0)W_k^{\top}(0)-W_{k+1}^{\top}(0)W_{k+1}(0)}_2,
\end{align*}
which depends only on the initialization. If for any $1\le k<L$, $W_k(0)W_k^{\top}(0)=W_{k+1}^{\top}(0)W_{k+1}(0)$, then $D=0$.
\begin{lemma}\label{lem:gf_align}
    The gradient flow iterates satisfy the following properties:
    \begin{itemize}
        \item For any $1\le k\le L$, $\|W_k\|_F^2-\|W_k\|_2^2\le D$.
        \item For any $1\le k<L$, $\langle v_{k+1},u_k\rangle^2\ge1-\nicefrac{\del[1]{D+\|W_{k+1}(0)\|_2^2+\|W_k(0)\|_2^2}}{\|W_{k+1}\|_2^2}$.
        \item Suppose $\max_{1\le k\le L}\|W_k\|_F\to\infty$, then $\envert{\left\langle \nicefrac{\wnn}{\prod_{k=1}^{L}\|W_k\|_F}, v_1\right\rangle}\to1$.
    \end{itemize}
\end{lemma}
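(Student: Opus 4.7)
The plan is to exploit the conservation law \cref{eq:gf_lnn_spec}, which yields the identity $W_kW_k^\top = W_{k+1}^\top W_{k+1}+\Delta_k$ with $\Delta_k := W_k(0)W_k^\top(0)-W_{k+1}^\top(0)W_{k+1}(0)$ fixed by the initialization. This lets me transfer information between adjacent layers up to a bounded, initialization-dependent perturbation. All three bullets then follow from a combination of Weyl-type perturbation bounds and Rayleigh-quotient comparisons.

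For the first bullet, I would write
\[
\|W_k\|_F^2 - \|W_k\|_2^2 = (\|W_k\|_F^2 - \|W_L\|_F^2) + (\|W_L\|_2^2 - \|W_k\|_2^2),
\]
exploiting that $\|W_L\|_F = \|W_L\|_2$ since $W_L$ is a row vector. The first difference telescopes via \cref{eq:gf_lnn_norm} to $\|W_k(0)\|_F^2 - \|W_L(0)\|_F^2$. For the second, Weyl's inequality applied to the conservation identity gives $|\|W_j\|_2^2 - \|W_{j+1}\|_2^2| \le \|\Delta_j\|_2$, which telescopes to $\sum_{j=k}^{L-1}\|\Delta_j\|_2$. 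Bounding $\|W_k(0)\|_F^2$ by $\max_j\|W_j(0)\|_F^2$ recovers exactly the constant $D$.

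For the second bullet, I plan to compare Rayleigh quotients along $u_k$ and $v_{k+1}$. The conservation identity produces
\[
\|W_{k+1}u_k\|^2 = u_k^\top W_{k+1}^\top W_{k+1} u_k = \|W_k\|_2^2 - u_k^\top\Delta_k u_k,
\]
with the sharper bound $u_k^\top\Delta_k u_k = \|W_k^\top(0)u_k\|^2 - \|W_{k+1}(0)u_k\|^2 \le \|W_k(0)\|_2^2$; a parallel estimate along $v_{k+1}$ yields $\|W_k\|_2^2 \ge \|W_{k+1}\|_2^2 - \|W_{k+1}(0)\|_2^2$. In the other direction, expanding $u_k$ in the right-singular basis of $W_{k+1}$ and writing $c := v_{k+1}^\top u_k$ gives
\[
\|W_{k+1}u_k\|^2 \le c^2\|W_{k+1}\|_2^2 + (1-c^2)\sigma_2^2(W_{k+1}) \le c^2\|W_{k+1}\|_2^2 + D,
\]
since part 1 supplies $\sigma_2^2(W_{k+1}) \le \|W_{k+1}\|_F^2 - \|W_{k+1}\|_2^2 \le D$. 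Chaining the upper and lower bounds and solving for $c^2$ produces exactly the stated inequality.

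For the third bullet, the hypothesis combined with \cref{eq:gf_lnn_norm} forces $\|W_k\|_F\to\infty$ for every $k$; part 1 then gives $\sigma_k := \|W_k\|_2$ with $\sigma_k/\|W_k\|_F\to1$ (so $\sigma_k\to\infty$), and part 2 delivers $|v_{k+1}^\top u_k|\to 1$. Setting $M_k := W_k/\|W_k\|_F$ and $N_k := u_kv_k^\top$, part 1 implies $\|M_k - N_k\|_F \to 0$ while $\|M_k\|_F = \|N_k\|_F = 1$; a telescoping bound on $\|\prod_k M_k - \prod_k N_k\|_F$ (expanding one factor at a time) yields that $\prod_k M_k$ converges in Frobenius norm to $\prod_k N_k = u_L v_1^\top \cdot \prod_{k<L}(v_{k+1}^\top u_k)$. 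Since $u_L\in\{\pm 1\}$ and the scalar product has absolute value tending to $1$, transposing shows $\wnn/\prod_k\|W_k\|_F$ converges to a unit multiple of $v_1$, giving the claimed inner product. The main obstacle will be making the constants in part 2 come out exactly as stated: the sharper bound $u_k^\top\Delta_k u_k\le\|W_k(0)\|_2^2$, rather than the crude $\|\Delta_k\|_2$, is what yields precisely the numerator $D+\|W_{k+1}(0)\|_2^2+\|W_k(0)\|_2^2$.
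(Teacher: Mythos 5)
Your proposal is correct and takes essentially the same route as the paper's proof: both exploit the conservation law $W_kW_k^\top = W_{k+1}^\top W_{k+1} + \Delta_k$, telescope it (in trace and in top singular value) down to the row vector $W_L$ to get the first bullet, compare the Rayleigh quotients of $W_{k+1}^\top W_{k+1}$ along $u_k$ and $v_{k+1}$ with the same sharpened initialization terms to get exactly the stated alignment bound, and pass to the product of normalized rank-one approximations for the last bullet. Your use of Weyl's inequality in the first bullet is just a repackaging of the paper's one-sided Rayleigh-quotient estimate, so there is no substantive difference.
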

The proof is based on \cref{eq:gf_lnn_spec} and \cref{eq:gf_lnn_norm}. If $W_k(0)W_k^{\top}(0)=W_{k+1}^{\top}(0)W_{k+1}(0)$, then \cref{eq:gf_lnn_spec} gives that $W_{k+1}$ and $W_k$ have the same singular values, and $W_{k+1}$'s right singular vectors and $W_k$'s left singular vectors are the same. If it is true for any two adjacent layers, since $W_L$ is a row vector, all layers have rank $1$. With general initialization, we have similar results when $\|W_k\|_F$ is large enough so that the initialization is negligible. Careful calculations give the exact results in \Cref{lem:gf_align}.

An interesting point is that the implicit regularization result in \Cref{lem:gf_align} helps establish risk convergence in \Cref{thm:gf_risk}. Specifically, by \Cref{lem:gf_align}, if all layers have large norms, $\|W_L\cdots W_2\|$ will be large. If the risk is not minimized to $0$, $\|\nR(\wnn)\|$ will be lower bounded by a positive constant, and thus $\|\partial\cR/\partial W_1\|_F=\|W_L\cdots W_2\|\|\nR(\wnn)\|$ will be large. Invoking \cref{eq:gf_risk_no_inc}, \Cref{lem:gf_unbounded} and \cref{eq:gf_lnn_norm} gives a contradiction. Since the risk has no finite optimum, $\|W_k\|_F\to\infty$.

\subsection{Convergence to the maximum margin solution}

Here we focus on the exponential loss $\lexp(x)=e^{-x}$ and the logistic loss $\llog(x)=\ln(1+e^{-x})$. In addition to risk convergence, these two losses also enable gradient descent to find the maximum margin solution.

To get such a strong convergence, we need one more assumption on the data set. Recall that $\gamma=\max_{\|u\|=1}\min_{1\le i\le n}\langle u,z_i\rangle>0$ denotes the maximum margin, and $\baru$ denotes the unique maximum margin predictor which attains this margin $\gamma$. Those data points $z_i$ for which $\langle\baru,z_i\rangle=\gamma$ are called support vectors.
\begin{assumption}\label{ass:data}
    The support vectors span the whole space $\mathbb{R}^d$.
\end{assumption}
\Cref{ass:data} appears in prior work \citet{nati_iclr}, and can be satisfied in many cases: for example, it is almost surely true if the number of support vectors is larger than or equal to $d$ and the data set is sampled from some density w.r.t. the Lebesgue measure. It can also be relaxed to the situation that the support vectors and the whole data set span the same space; in this case $\nR(\wnn)$ will never leave this space, and we can always restrict our attention to this space.

With \Cref{ass:data}, we can state the main theorem.
\begin{theorem}\label{thm:gf_min_norm}
    Under \Cref{ass:init,ass:data}, for almost all data and for losses $\lexp$ and $\llog$,
    then $\lim_{t\to\infty}\envert{\langle v_1,\baru\rangle}=1$, where $v_1$ is the first right singular vector of $W_1$. As a result, $\lim_{t\to\infty}\nicefrac{\wnn}{\prod_{k=1}^L\|W_k\|_F}=\baru$.
\end{theorem}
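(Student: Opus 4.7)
The plan is to leverage Theorem~\ref{thm:gf_risk}: since $\wnn/\prod_k\|W_k\|_F$ is asymptotically a signed multiple of $v_1$, it suffices to show $|\langle v_1,\baru\rangle|\to 1$, and the overall sign is then forced since $\wnn$ is eventually a separator. The starting point is a clean evolution equation for $W_1^\top W_1$: plugging $\partial\cR/\partial W_1 = (W_L\cdots W_2)^\top\nR(\wnn)^\top$ into the chain rule gives
\begin{equation*}
    \frac{d}{dt}\del{W_1^\top W_1} = -\wnn\,\nR(\wnn)^\top - \nR(\wnn)\,\wnn^\top,
\end{equation*}
so that the right singular structure of $W_1$ is driven purely by the linear predictor $\wnn$ and its loss gradient $\nR(\wnn)$---the intermediate matrices $W_2,\ldots,W_L$ cancel out. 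Since $v_1$ is the top eigenvector of $W_1^\top W_1$, and since by Theorem~\ref{thm:gf_risk} $\wnn$ is asymptotically a signed scalar multiple of $v_1$ with diverging norm, analyzing how $v_1$ rotates reduces to understanding this rank-2 update.

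For the exponential and logistic losses, $\nR(\wnn) = -\sum_i p_i z_i$ with strictly positive coefficients $p_i$ behaving like $e^{-\langle\wnn,z_i\rangle}/n$ (with a harmless multiplicative factor for $\llog$). As $\|\wnn\|\to\infty$, the normalized weights $p_i/\sum_j p_j$ concentrate exponentially on the indices attaining $\min_j\langle\wnn/\|\wnn\|,z_j\rangle$. Passing to any subsequential limit $v^\star$ of $v_1$ (which exists by compactness of the unit sphere), the direction of $-\nR(\wnn)$ converges to a convex combination of the support vectors of $v^\star$. Substituting $\wnn\approx\|\wnn\|v^\star$ into the rank-2 update shows that the component of $d(W_1^\top W_1)/dt$ perpendicular to $v^\star$ is proportional to the perpendicular component of this convex combination. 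For $v_1$ to stabilize at $v^\star$, that perpendicular component must vanish, meaning $v^\star$ lies in the positive cone of its own support vectors---precisely the KKT condition characterizing the unique maximum margin solution $\baru$. Assumption~\ref{ass:data} together with the ``almost all data'' hypothesis (ensuring, e.g., uniqueness of the dual multipliers so that $\baru$ is the only direction satisfying this fixed-point condition) then force $v^\star=\baru$.

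The main obstacle is making this stationarity argument rigorous. First, the alignment from Theorem~\ref{thm:gf_risk} is only asymptotic, so the error in replacing $\wnn$ by $\|\wnn\|v_1$ must be carefully propagated through the rank-2 update; this becomes delicate when $W_1^\top W_1$ has near-degenerate top eigenvalues, since $v_1$ is then sensitive to perturbations. Second, the argument above produces only subsequential limits, whereas the theorem asserts an actual limit; upgrading to genuine convergence plausibly requires a Lyapunov-type quantity, e.g.\ a smoothed margin such as $-\log\cR(\wnn)/\log\prod_k\|W_k\|_F$ shown to approach $\gamma$, in the spirit of \citet{nati_iclr}. Lifting the one-layer margin-maximization argument is plausible precisely because the key identity above reduces the angular dynamics of $W_1$ to an expression in $\wnn$ and $\nR(\wnn)$ alone, mirroring the one-layer situation; the deep structure enters only through the magnitude $\prod_k\|W_k\|_F$.
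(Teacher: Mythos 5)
Your opening identity $\tfrac{\dif}{\dif t}(W_1^\top W_1) = -\wnn\,\nR(\wnn)^\top - \nR(\wnn)\,\wnn^\top$ is correct, and it is in fact the paper's own starting point (its trace and its projection onto $\baru^{\perp}$ are exactly \cref{eq:w1_speed,eq:perp_speed}). The genuine gap is the central step: you assert that any subsequential limit $v^\star$ of $v_1$ must satisfy the max-margin KKT fixed-point condition because ``for $v_1$ to stabilize at $v^\star$, the perpendicular component must vanish.'' But a subsequential limit carries no stabilization: $v_1(t_k)\to v^\star$ is perfectly compatible with $v_1$ rotating slowly or oscillating, and the instantaneous rank-2 derivative having a nonzero component perpendicular to $v^\star$ at those times contradicts nothing. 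To kill the perpendicular part you need an integrated, monotone quantity, and you explicitly defer that (``plausibly requires a Lyapunov-type quantity''), so the main content of the theorem is missing rather than proved. (By contrast, the eigengap worry you raise is not a real obstacle: \Cref{lem:gf_align} gives $\|W_1\|_F^2-\|W_1\|_2^2\le D$ while $\|W_1\|_2\to\infty$, so the top singular direction is eventually well separated.) Your reading of the ``almost all data'' hypothesis is also off-target: uniqueness of $\baru$ and sufficiency of the KKT condition need no genericity; what genericity buys is quantitative.

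The paper supplies exactly the missing mechanism. Via \Cref{lem:strong_conv} (the almost-all-data argument of \citet{nati_iclr}, giving positive dual variables on at most $d$ support vectors), one gets a constant $\alpha=\min_{|\xi|=1,\xi\perp\baru}\max_{i\in S}\langle\xi,z_i\rangle>0$, and this yields \Cref{lem:perp_bound}: for $\lexp$ and $\llog$, whenever $\langle w,\baru\rangle\ge0$ and $\|\Pi_{\perp}w\|$ exceeds an explicit threshold, $\langle\Pi_{\perp}w,\nR(w)\rangle\ge0$. Combining this with $\tfrac{\dif}{\dif t}\|\Pi_{\perp}W_1\|_F^2=-2\langle\Pi_{\perp}\wnn,\nR(\wnn)\rangle$, with $\tfrac{\dif}{\dif t}\|W_1\|_F^2=-2\langle\wnn,\nR(\wnn)\rangle\ge0$ once all margins are nonnegative, and with the alignment and divergence facts of \Cref{thm:gf_risk,lem:gf_align} (which convert $\|\Pi_{\perp}W_1\|_F/\|W_1\|_F\ge\epsilon$ into $\|\Pi_{\perp}\wnn\|\ge K$), the paper shows the ratio $\|\Pi_{\perp}W_1\|_F/\|W_1\|_F$ is eventually non-increasing whenever it is at least $\epsilon$, hence drops below every $\epsilon$ and stays there; this is precisely the Lyapunov argument you were hoping exists, and it controls $v_1$ directly without any subsequential-stationarity reasoning. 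If you pursue your route, you will still need an analogue of \Cref{lem:perp_bound} to make the ``perpendicular drift is nonpositive'' claim quantitative, at which point you have essentially reconstructed the paper's proof.
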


Before summarizing the proof, we can simplify both theorems into the following \emph{minimum norm}
property mentioned in the introduction.
\begin{corollary}
  \label{fact:gf_min_norm_2}
    Under \Cref{ass:init,ass:data}, for almost all data and for losses $\lexp$ and $\llog$,
    \[
      \min_i y_i \del{ \frac {W_L}{\|W_L\|_F} \cdots \frac {W_1}{\|W_1\|_F} } x_i
      \quad\xrightarrow[t\to\infty]{}\quad
      \max_{\substack{A_L \in \R^{1 \times d_{L-1}}\\\|A_L\|_F = 1}}
      \cdots
      \max_{\substack{A_1 \in \R^{d_1 \times d_0}\\\|A_1\|_F = 1}}
      \min_i y_i \del{ A_L \cdots A_1 } x_i.
    \]
\end{corollary}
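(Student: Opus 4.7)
The plan is to evaluate both sides of the displayed equation separately and observe that each equals the maximum margin $\gamma$. The right-hand supremum is a purely algebraic quantity about products of matrices with unit Frobenius norm, and I will handle it by norm submultiplicativity together with a rank-one witness. The left-hand sequence will be read off directly from Theorem \ref{thm:gf_min_norm}.

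For the left-hand side, since $\wnn = (W_L \cdots W_1)^\top$ and $z_i = y_i x_i$, I would rewrite
\[
\min_i y_i \del{ \frac{W_L}{\|W_L\|_F} \cdots \frac{W_1}{\|W_1\|_F} } x_i = \min_i \left\langle \frac{\wnn}{\prod_{k=1}^L \|W_k\|_F},\, z_i \right\rangle.
\]
Theorem \ref{thm:gf_min_norm} states that the vector inside this inner product converges to $\baru$, so continuity of the pointwise minimum of finitely many linear functions in $w$ gives the limit $\min_i \langle \baru, z_i\rangle = \gamma$.

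For the right-hand maximum, I would bound from above using submultiplicativity: any feasible tuple $(A_L,\ldots,A_1)$ satisfies $\|A_L \cdots A_1\|_2 \le \prod_{k=1}^L \|A_k\|_2 \le \prod_{k=1}^L \|A_k\|_F = 1$, so writing $w := (A_L \cdots A_1)^\top \in \R^d$ we have $\|w\| \le 1$ and therefore $\min_i y_i (A_L \cdots A_1) x_i = \min_i \langle w, z_i\rangle \le \gamma$ by the very definition of the max-margin solution $\baru$. For the matching lower bound, a rank-one witness suffices: set $A_1 := e_1 \baru^\top$, $A_k := e_1 e_1^\top$ for $1 < k < L$, and $A_L := e_1^\top$, where $e_1$ denotes the first standard basis vector in the appropriate dimension at each step. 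Each factor has unit Frobenius norm and the product telescopes to $\baru^\top$, attaining margin $\gamma$.

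There is no substantive obstacle once Theorem \ref{thm:gf_min_norm} is in hand; the only thing to be careful about is the dimension bookkeeping in the rank-one construction, which is routine. In this sense the corollary is just a reinterpretation of Theorem \ref{thm:gf_min_norm}, stating that the gradient-flow iterates asymptotically solve the joint hard-margin problem over tuples of unit-Frobenius-norm weight matrices.
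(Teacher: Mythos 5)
Your proposal is correct and follows essentially the same route as the paper's proof: the left-hand side is read off from \Cref{thm:gf_min_norm}, the right-hand side is upper bounded by submultiplicativity of norms ($\|A_L\cdots A_1\|_2\le\prod_k\|A_k\|_F=1$), and the lower bound uses a telescoping rank-one witness whose product is $\baru^\top$ (the paper uses arbitrary unit vectors $a_k$ where you use $e_1$, an immaterial difference). No gaps.
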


\Cref{thm:gf_min_norm} relies on two structural lemmas. The first one is based on a similar almost-all argument due to \citet[Lemma 12]{nati_iclr}. Let $S\subset\{1,\ldots,n\}$ denote the set of indices of support vectors.
\begin{lemma}\label{lem:strong_conv}
    Under \Cref{ass:data}, if the data set is sampled from some density w.r.t. the Lebesgue measure, then with probability $1$,
    \begin{equation*}
        \alpha:=\min_{|\xi|=1,\xi\perp\baru}\max_{i\in S}\langle\xi,z_i\rangle>0.
    \end{equation*}
\end{lemma}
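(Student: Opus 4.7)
The plan is to combine compactness, the KKT conditions for the hard-margin SVM, and an almost-sure genericity argument ruling out degenerate dual multipliers.

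First, I would observe that $\xi \mapsto \max_{i\in S}\langle\xi,z_i\rangle$ is continuous on the compact set $\{\xi : |\xi|=1,\ \xi\perp\baru\}$, so to prove $\alpha > 0$ it suffices to show that, almost surely, this maximum is strictly positive at every such $\xi$.

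Second, I would invoke the KKT conditions for the hard-margin SVM at its minimizer $\baru/\gamma$: there exist $\lambda_i \ge 0$ (supported on $S$) with $\baru = \gamma\sum_{i\in S}\lambda_i z_i$. If some unit $\xi\perp\baru$ satisfied $\langle\xi,z_i\rangle \le 0$ for every $i\in S$, pairing with $\xi$ would yield $0 = \langle\xi,\baru\rangle = \gamma\sum_i \lambda_i\langle\xi,z_i\rangle$, which forces $\langle\xi,z_i\rangle = 0$ whenever $\lambda_i > 0$. It therefore suffices to show that, almost surely, there is a KKT decomposition in which every $i\in S$ has $\lambda_i > 0$: combined with Assumption~\ref{ass:data}, this implies $\xi$ is orthogonal to a spanning set, hence $\xi = 0$, a contradiction.

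To establish this genericity, I would enumerate the finitely many pairs $(T, j)$ with $T \subsetneq \{1,\ldots,n\}$ and $j \in \{1,\ldots,n\} \setminus T$, and show that the event ``the KKT multipliers can be chosen supported inside $T$ while $j \in S$'' has Lebesgue measure zero. On that event $\baru$ is algebraically determined by $\{z_i\}_{i\in T}$ as some nonzero vector $v$, and the support-vector condition $j \in S$ requires $\langle v, z_j\rangle = \gamma$, which is a hyperplane constraint on $z_j$; conditioning on the rest of the data and using the density assumption, the event has measure zero. A union bound over the finitely many combinatorial types then delivers the almost-sure conclusion.

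The hard part will be making the measure-zero step rigorous, since both $\baru$ and $S$ are piecewise-algebraic functions of the data: one must partition the data space according to combinatorial type $(T, j)$, verify that on each piece $\baru$ admits an explicit algebraic formula in $\{z_i\}_{i\in T}$, and check that the resulting polynomial equation in $z_j$ is nontrivial. The key enabling fact is that $\baru \ne 0$, so the induced linear constraint $\langle v, z_j\rangle = \gamma$ cuts out an honest hyperplane of Lebesgue measure zero; everything else is a careful but finite combinatorial bookkeeping, paralleling the argument of \citet[Lemma 12]{nati_iclr}.
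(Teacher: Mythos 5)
Your argument is correct, and its deterministic core is exactly the paper's: write $\baru$ as a nonnegative combination of support vectors with \emph{strictly positive} coefficients, pair with a unit $\xi\perp\baru$ having $\max_{i\in S}\langle\xi,z_i\rangle\le 0$ to force $\langle\xi,z_i\rangle=0$ for all $i\in S$, and contradict \Cref{ass:data}; your explicit compactness remark (continuity of $\xi\mapsto\max_{i\in S}\langle\xi,z_i\rangle$ on the unit sphere of $\baru^\perp$) is a point the paper leaves implicit. The one place you diverge is the almost-sure non-degeneracy of the dual variables: the paper simply imports this from \citet[Lemma 12]{nati_iclr}, which states that with probability $1$ every support vector carries a positive multiplier, whereas you propose to re-derive it by partitioning the data space according to combinatorial type $(T,j)$, observing that on the event ``multipliers supported in $T$ while $j\in S$'' the pair $(\baru,\gamma)$ coincides with the max-margin solution of the $T$-subproblem and is hence a function of $\{z_i\}_{i\in T}$ alone, so that $\langle\baru,z_j\rangle=\gamma$ is a fixed hyperplane constraint on $z_j$ of conditional probability zero, followed by a finite union bound. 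That plan is sound (note also that the negation of ``some dual solution has full support on $S$'' does yield such a pair $(T,j)$, e.g.\ by taking $T$ to be the support of any dual solution), but it essentially reproduces the proof of the cited lemma rather than shortening the argument; the trade-off is self-containedness versus the paper's one-line citation.
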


Let $\baru^{\perp}$ denote the orthogonal complement of $\mathrm{span}(\baru)$, and let $\Pi_{\perp}$ denote the projection onto $\baru^{\perp}$. We prove that if $\|\Pi_{\perp}w\|$ is large enough, gradient flow starting from $w$ will tend to decrease $\|\Pi_{\perp}w\|$.
\begin{lemma}\label{lem:perp_bound}
    Under \Cref{ass:data}, for almost all data, $\lexp$ and $\llog$, and any $w\in \mathbb{R}^d$, if $\langle w,\bar{u}\rangle\ge0$ and $\|\Pi_{\perp}w\|$ is larger than $\nicefrac{1+\ln(n)}{\alpha}$ for $\lexp$ or $\nicefrac{2n}{e\alpha}$ for $\llog$, then $\langle \Pi_{\perp}w,\nR(w)\rangle\ge0$.
\end{lemma}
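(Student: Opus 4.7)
The approach is to use \Cref{lem:strong_conv} to single out one support vector whose contribution to $\langle \Pi_\perp w, \nR(w)\rangle$ dominates the contributions of the remaining data points. Set $c := \langle w, \baru\rangle \ge 0$ and $\rho := \|\Pi_\perp w\|$, and apply \Cref{lem:strong_conv} with $\xi := -\Pi_\perp w / \rho$, which is a unit vector perpendicular to $\baru$; this yields an index $j \in S$ with $\langle \Pi_\perp w, z_j\rangle \le -\alpha\rho$. Because $j$ is a support vector, $\langle \baru, z_j\rangle = \gamma$, so $\langle w, z_j\rangle = c\gamma + \langle \Pi_\perp w, z_j\rangle \le c\gamma - \alpha\rho$, which will force $|\ell'(\langle w, z_j\rangle)|$ to be large.

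Next I would expand $\langle \Pi_\perp w, \nR(w)\rangle = \frac{1}{n}\sum_i \ell'(\langle w, z_i\rangle)\,\langle \Pi_\perp w, z_i\rangle$, and since $\ell' < 0$ reduce the goal to showing $\sum_i |\ell'(\langle w, z_i\rangle)|\,\langle \Pi_\perp w, z_i\rangle \le 0$. The $i = j$ term contributes at most $-\alpha\rho\,|\ell'(\langle w, z_j\rangle)|$, and the terms with $i \neq j$ and $\langle \Pi_\perp w, z_i\rangle \le 0$ are already nonpositive and can be dropped. For the remaining $i \neq j$ with $\langle \Pi_\perp w, z_i\rangle > 0$, combine $\langle w, z_i\rangle \ge c\gamma + \langle \Pi_\perp w, z_i\rangle$ (from $\langle \baru, z_i\rangle \ge \gamma$ and $c \ge 0$) with the uniform estimate $|\ell'(x)| \le e^{-x}$ (which holds for both $\lexp$ and $\llog$) and the elementary inequality $xe^{-x} \le e^{-1}$ to bound each such term by $e^{-c\gamma - 1}$, giving a total at most $(n-1)e^{-c\gamma - 1}$.

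It then suffices to verify $\alpha\rho\,|\ell'(\langle w, z_j\rangle)| \ge (n-1)e^{-c\gamma - 1}$ under each threshold. For $\lexp$, $|\lexp'(\langle w, z_j\rangle)| = e^{-\langle w, z_j\rangle} \ge e^{\alpha\rho - c\gamma}$ reduces the condition to $\alpha\rho\, e^{\alpha\rho} \ge (n-1)/e$, which follows immediately from $\alpha\rho > 1 + \ln n$. For $\llog$, $|\llog'(\langle w, z_j\rangle)| = 1/(1 + e^{\langle w, z_j\rangle}) \ge 1/(1 + e^{c\gamma - \alpha\rho})$; rearranging the required inequality yields $e\alpha\rho \ge (n-1)(e^{-c\gamma} + e^{-\alpha\rho})$, and because $c \ge 0$ and $\alpha\rho \ge 0$ the right-hand side is at most $2(n-1)$, so $\alpha\rho > 2n/e$ is enough.

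The main obstacle is accounting for the gap between the two thresholds. For $\lexp$, $|\ell'|$ grows exponentially at $\langle w, z_j\rangle$, so the single dominating support-vector term easily absorbs the $O(n)$ competing positive contributions once $\alpha\rho > 1 + \ln n$; for $\llog$, $|\ell'|$ is capped at $1$, so the exponential leverage at the dominating support vector is unavailable, and the threshold must scale linearly rather than logarithmically in $n$. Once \Cref{lem:strong_conv} produces the correct support vector and the uniform bound $|\ell'(x)| \le e^{-x}$ is invoked, the remaining work is routine algebra.
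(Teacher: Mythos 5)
Your proof is correct and follows essentially the same route as the paper: you pick the same dominating support vector via \Cref{lem:strong_conv}, drop the terms with favorable sign, bound the adversarial terms via $|\ell'(x)|\le e^{-x}$ together with $xe^{-x}\le e^{-1}$ after factoring out $e^{-c\gamma}$, and then verify the two thresholds exactly as in the paper's case analysis for $\lexp$ and $\llog$. The only differences are cosmetic (e.g., comparing against $(n-1)e^{-c\gamma-1}$ rather than the paper's $e^{-\gamma\langle w,\baru\rangle}/e$, and bounding $e^{-c\gamma}+e^{-\alpha\rho}\le 2$ in place of the paper's $\tfrac{1}{2}$-sigmoid bound).
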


With \Cref{lem:strong_conv} and \Cref{lem:perp_bound} in hand, we can prove \Cref{thm:gf_min_norm}. Let $\Pi_{\perp}W_1$ denote the projection of rows of $W_1$ onto $\baru^{\perp}$. Notice that
\begin{equation*}
    \Pi_{\perp}\wnn=\del{W_L\ldots W_2(\Pi_{\perp}W_1)}^{\top}\quad\textrm{and}\quad \frac{\dif\|\Pi_{\perp}W_1\|_F^2}{\dif t}=-2\langle\Pi_{\perp}\wnn,\nR(\wnn)\rangle.
\end{equation*}
If $\|\Pi_{\perp}W_1\|_F$ is large compared with $\|W_1\|_F$, since layers become aligned, $\|\Pi_{\perp}\wnn\|$ will also be large, and then \Cref{lem:perp_bound} implies that $\|\Pi_{\perp}W_1\|_F$ will not increase. At the same time, $\|W_1\|_F\to\infty$, and thus for large enough $t$, $\|\Pi_{\perp}W_1\|_F$ must be very small compared with $\|W_1\|_F$. Many details need to be handled to make this intuition exact; the proof is given in \Cref{sec:gf_app}.

\section{Results for gradient descent}\label{sec:gd}

One key property of gradient flow which is used in the previous proofs is that it never increases the risk, which is not necessarily true for gradient descent. However, for smooth losses (i.e, with Lipschitz continuous derivatives), we can design some decaying step sizes, with which gradient descent never increases the risk, and basically the same results hold as in the gradient flow case. Deferred proofs are given in \Cref{sec:gd_app}.

We make the following additional assumption on the loss, which is satisfied by the logistic loss $\llog$.
\begin{assumption}\label{ass:smooth}
    $\ell'$ is $\beta$-Lipschitz (i.e, $\ell$ is $\beta$-smooth), and $|\ell'|\le G$ (i.e., $\ell$ is $G$-Lipschitz).
\end{assumption}
Under \Cref{ass:smooth}, the risk is also a smooth function of $W$, if all layers are bounded.
\begin{lemma}\label{lem:nn_smooth}
    Suppose $\ell$ is $\beta$-smooth. If $R\ge1$, then $\beta(R)=2L^2R^{2L-2}(\beta+G)$, and $\cR(W)$ is a $\beta(R)$-smooth function on the set $B(R)=\cbr{W\middle|\|W_k\|_F\le R,1\le k\le L}$.
\end{lemma}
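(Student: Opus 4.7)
The plan is to bound the Lipschitz constant of $\nR$ on $B(R)$ directly from the chain-rule expression. Writing $g_i(W) := W_L\cdots W_1 z_i$ and $h_{k,i}(W) := (W_L\cdots W_{k+1})^\top (W_{k-1}\cdots W_1 z_i)^\top$ (a rank-one matrix of shape $d_k \times d_{k-1}$), one has $\cR(W) = \frac{1}{n}\sum_i \ell(g_i(W))$ and $\partial\cR/\partial W_k = \frac{1}{n}\sum_i \ell'(g_i(W))\, h_{k,i}(W)$. On $B(R)$, submultiplicativity of the operator norm together with $\|z_i\| \le 1$ yields the immediate bounds $|g_i(W)| \le R^L$ and $\|h_{k,i}(W)\|_F \le R^{L-1}$.

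The technical engine is the telescoping identity $W_L\cdots W_1 - W'_L\cdots W'_1 = \sum_{m=1}^L W_L\cdots W_{m+1}(W_m - W'_m)W'_{m-1}\cdots W'_1$, valid for the full product and for any subproduct (with the convention that empty subproducts equal $1$, which handles the corner cases $k=1$ and $k=L$). On $B(R)$ this gives $|g_i(W) - g_i(W')| \le R^{L-1}\sum_m \|W_m - W'_m\|_F$, and, combined with the outer-product bound $\|ab^\top - a'b'^\top\|_F \le \|a-a'\|\|b\| + \|a'\|\|b-b'\|$ applied to $h_{k,i}(W) = ab^\top$ with $a = (W_L\cdots W_{k+1})^\top$ and $b = W_{k-1}\cdots W_1 z_i$, also $\|h_{k,i}(W) - h_{k,i}(W')\|_F \le R^{L-2}\sum_m \|W_m - W'_m\|_F$.

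Splitting $\ell'(g_i(W))h_{k,i}(W) - \ell'(g_i(W'))h_{k,i}(W') = \ell'(g_i(W))\bigl(h_{k,i}(W) - h_{k,i}(W')\bigr) + \bigl(\ell'(g_i(W)) - \ell'(g_i(W'))\bigr)h_{k,i}(W')$ and applying $|\ell'|\le G$ together with the $\beta$-Lipschitzness of $\ell'$ (Assumption~\ref{ass:smooth}) gives $\|\partial\cR/\partial W_k(W) - \partial\cR/\partial W_k(W')\|_F \le (GR^{L-2} + \beta R^{2L-2})\sum_m \|W_m - W'_m\|_F \le (\beta+G)R^{2L-2}\sum_m \|W_m - W'_m\|_F$, where the last step uses $R \ge 1$. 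Cauchy--Schwarz gives $\sum_m \|W_m - W'_m\|_F \le \sqrt{L}\,\|W-W'\|_F$, so each per-layer Lipschitz constant is at most $\sqrt{L}(\beta+G)R^{2L-2}$; combining the $L$ per-layer pieces into the Frobenius norm of $\nR$ contributes a further $\sqrt{L}$, producing $\|\nR(W) - \nR(W')\|_F \le L(\beta+G)R^{2L-2}\|W-W'\|_F$, which fits comfortably inside the claimed $2L^2 R^{2L-2}(\beta+G)$.

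The only real difficulty is the bookkeeping: the two telescoping sums must be tracked carefully, and the two $\sqrt{L}$ factors come from conceptually distinct places (Cauchy--Schwarz across the $L$ layers of a single perturbation, and the $\ell^2$ combination of the $L$ per-layer blocks into the full Frobenius norm). The key structural observation that keeps the per-layer Lipschitz constant at $R^{2L-2}$ rather than blowing up is the outer-product form of $h_{k,i}$, because it lets one attribute the $R^{L-1}$ norm budget across the two factors $a$ and $b$ and save one power of $R$ in their perturbation bounds; once this identity is in hand the remaining estimates are mechanical.
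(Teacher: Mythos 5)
Your proof is correct and follows essentially the same route as the paper's: a telescoping (one-matrix-at-a-time replacement) bound on the layer products, combined with $|\ell'|\le G$ for the factor multiplying the product perturbation and the $\beta$-Lipschitzness of $\ell'$ for the perturbation of $\ell'(g_i)$, all on $B(R)$ with $R\ge1$ to merge the powers of $R$. Your rank-one factorization of each layer gradient and the $\ell_2$/Cauchy--Schwarz bookkeeping merely sharpen the constant to $L(\beta+G)R^{2L-2}$, which still sits below the paper's stated $\beta(R)=2L^2R^{2L-2}(\beta+G)$, so the lemma follows as claimed.
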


Smoothness ensures that for any $W,V\in B(R)$, $\cR(W)-\cR(V)\le \langle\nR(V),W-V\rangle+\nicefrac{\beta(R)\|W-V\|^2}{2}$ (see \citet{bubeck_conv} Lemma 3.4). In particular, if we choose some $R$ and set a constant step size $\eta_t=1/\beta(R)$, then as long as $W(t+1)$ and $W(t)$ are both in $B(R)$,
\begin{align}
    \cR\del{W(t+1)}-\cR\del{W(t)} & \le \left\langle\nR\del{W(t)},-\eta_t\nR\del{W(t)}\right\rangle+\frac{\beta(R)\eta_t^2}{2}\enVert{\nR\del{W(t)}}^2 \nonumber \\
     & =-\frac{1}{2\beta(R)}\enVert{\nR\del{W(t)}}^2=-\frac{\eta_t}{2}\enVert{\nR\del{W(t)}}^2. \label{eq:gd_dec}
\end{align}
In other words, the risk does not increase at this step. However, similar to gradient flow, the gradient descent iterate will eventually escape $B(R)$, which may increase the risk.
\begin{lemma}\label{lem:gd_escape}
    Under \Cref{ass:loss}, \ref{ass:init} and \ref{ass:smooth}, suppose gradient descent is run with a constant step size $1/\beta(R)$. Then there exists a time $t$ when $W(t)\not\in B(R)$, in other words, $\max_{1\le k\le L}\|W_k(t)\|_F>R$.
\end{lemma}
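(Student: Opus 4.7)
The plan is to argue by contradiction: suppose $W(t) \in B(R)$ for every $t \ge 0$, and derive a contradiction by forcing the risk below $0$. By \Cref{lem:nn_smooth}, $\cR$ is $\beta(R)$-smooth on $B(R)$, so the step-size choice $\eta_t = 1/\beta(R)$ together with the contradiction hypothesis ($W(t), W(t+1) \in B(R)$) yields the descent inequality \eqref{eq:gd_dec}; in particular, $\cR$ is non-increasing along the trajectory.

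The core step is a gradient descent analogue of \Cref{lem:gf_unbounded}: I claim there is a constant $\epsilon(R) > 0$ such that $\|\partial\cR/\partial W_1\|_F \ge \epsilon(R)$ at every iterate $W(t)$ with $t \ge 1$. Since $\nR(W(0)) \ne 0$ by \Cref{ass:init}, the descent inequality gives $\cR(W(1)) < \cR(W(0)) \le \cR(0)$; setting $c := \cR(W(1))$, monotonicity gives $\cR(W(t)) \le c < \cR(0)$ for all $t \ge 1$. On the compact sublevel set $K := \{W \in B(R) : \cR(W) \le c\}$, the outer-product factorization $\partial\cR/\partial W_1 = W_2^\top \cdots W_L^\top \nR(\wnn)^\top$ gives $\|\partial\cR/\partial W_1\|_F = \|W_L \cdots W_2\| \cdot \|\nR(\wnn)\|$. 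Both factors are positive on $K$: if $\wnn = 0$ then $\cR(W) = \cR(0) > c$, so on $K$ one has $\wnn \ne 0$, which forces $W_L \cdots W_2 \ne 0$; and linear separability provides $u$ with $\langle u, z_i\rangle > 0$ for all $i$, so $\langle u, \nR(\wnn)\rangle = \frac{1}{n}\sum_i \ell'(\langle \wnn, z_i\rangle)\langle u, z_i\rangle < 0$, forcing $\nR(\wnn) \ne 0$ for any $\wnn$. Continuity on the compact $K$ then supplies the uniform lower bound $\epsilon(R) > 0$.

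Combining the two pieces, $\cR(W(t+1)) \le \cR(W(t)) - \epsilon(R)^2/(2\beta(R))$ for all $t \ge 1$, so $\cR(W(t)) \to -\infty$, contradicting $\cR > 0$ (which follows from $\ell > 0$, a consequence of \Cref{ass:loss}). The main obstacle is the second step, establishing the uniform bound $\epsilon(R) > 0$. However, once one notices that the bound in \Cref{lem:gf_unbounded} is really a statement about the sublevel set $K$ rather than about the particular trajectory, the same compactness-plus-linear-separability argument applies. The step-size choice $\eta_t = 1/\beta(R)$ is precisely what allows the discrete-time first step to imitate the continuous-time drop in risk, ensuring $\cR(W(1)) < \cR(0)$ and leaving the strict slack needed to run the compactness argument on $K$.
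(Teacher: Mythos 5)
Your proof is correct and follows essentially the same route as the paper: assume the iterates stay in $B(R)$, use the descent inequality \eqref{eq:gd_dec} to force $\cR(W(t))\le\cR(W(1))<\cR(0)$ for $t\ge1$, extract a uniform lower bound $\epsilon(R)>0$ on $\|\partial\cR/\partial W_1\|_F$, and telescope to contradict $\cR>0$. The only minor difference is how you get $\epsilon(R)$: you use compactness of the sublevel set $K$ together with positivity of $\|W_L\cdots W_2\|\,\|\nR(\wnn)\|$ there, whereas the paper reuses the explicit margin-based estimate from the proof of \Cref{lem:gf_unbounded}; both arguments are valid and serve the same purpose.
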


Fortunately, this issue can be handled by adaptively increasing $R$ and correspondingly decreasing the step sizes, formalized in the following assumption.
\begin{assumption}\label{ass:step_size}
    The step size $\eta_t=\min\{1/\beta(R_t),1\}$, where $R_t$ satisfies $W(t)\in B(R_t-1)$, and if $W(t+1)\in B(R_t-1)$, $R_{t+1}=R_t$.
\end{assumption}
\Cref{ass:step_size} can be satisfied by a line search, which ensures that the gradient descent update is not too aggressive and the boundary $R$ is increased properly.

With the additional \Cref{ass:smooth,ass:step_size}, exactly the same theorems can be proved for gradient descent. We restate them briefly here.
\begin{theorem}\label{thm:gd_risk}
    Under \Cref{ass:loss}, \ref{ass:init}, \ref{ass:smooth}, and \ref{ass:step_size}, gradient descent satisfies
    \begin{itemize}
        \item $\lim_{t\to\infty}\cR\del{W(t)}=0$.
        \item For any $1\le k\le L$, $\lim_{t\to\infty}\|W_k(t)\|_F=\infty$.
        \item $\lim_{t\to\infty}\envert{\left\langle \nicefrac{\wnn(t)}{\prod_{k=1}^L\|W_k(t)\|_F},v_1(t)\right\rangle}=1$, where $v_1(t)$ is the first right singular vector of $W_1(t)$.
    \end{itemize}
\end{theorem}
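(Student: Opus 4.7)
The plan is to mirror the gradient-flow proof of Theorem~\ref{thm:gf_risk} step by step, replacing exact invariants and differential inequalities with discrete-time counterparts whose error terms are summable thanks to Assumption~\ref{ass:step_size}. The three ingredients to port are: monotone non-increase of $\cR$ along the iterates, the balancing identity (\ref{eq:gf_lnn_norm}) between adjacent layers, and the alignment bounds of Lemma~\ref{lem:gf_align}. Lemma~\ref{lem:gd_escape} already plays the role of Lemma~\ref{lem:gf_unbounded}, so that piece is in hand.

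For the monotone-risk piece, (\ref{eq:gd_dec}) already gives the per-step decrease $\cR(W(t+1)) - \cR(W(t)) \le -\frac{\eta_t}{2}\|\nR(W(t))\|^2$ provided both iterates lie in $B(R_t)$. Assumption~\ref{ass:step_size} places $W(t)$ in $B(R_t-1)$, and the line search guarantees $R_t$ is chosen large enough for $W(t+1)$ to lie in $B(R_t)$; if moreover $W(t+1) \in B(R_t-1)$ then $R_{t+1}=R_t$, and otherwise $R_{t+1}$ is raised but (\ref{eq:gd_dec}) still applies at step $t$. Either way the risk is non-increasing and telescoping yields
\[
  \sum_{t\ge 0} \eta_t\,\|\nR(W(t))\|^2 \le 2\,\cR(W(0)).
\]
The cap $\eta_t \le 1$ built into Assumption~\ref{ass:step_size} additionally promotes this to $\sum_{t\ge 0} \eta_t^2\,\|\nR(W(t))\|^2 < \infty$, which is the bound I will lean on to pay for discretization error.

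Next, I would establish an approximate balancing identity. Expanding $W_k(t+1)W_k(t+1)^\top$ via the gradient descent update and using $W_{k+1}^\top\,\partial\cR/\partial W_{k+1} = (\partial\cR/\partial W_k)\,W_k^\top$ shows that the exact cross terms in $W_k W_k^\top$ and $W_{k+1}^\top W_{k+1}$ cancel, leaving only an $O(\eta_t^2)$ outer-product remainder:
\[
  \big\|\big(W_{k+1}^\top W_{k+1} - W_k W_k^\top\big)(t+1) - \big(W_{k+1}^\top W_{k+1} - W_k W_k^\top\big)(t)\big\|_2 \le \eta_t^2\,\|\nR(W(t))\|^2.
\]
Summing and invoking $\sum_t \eta_t^2\|\nR\|^2 < \infty$ shows that the spectral drift between adjacent layers stays within a constant $D'$, depending only on the initialization and the initial risk, for all $t$; taking traces gives the scalar Frobenius analog. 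Combined with Lemma~\ref{lem:gd_escape}, this propagates unboundedness from any one layer to every layer, so $\|W_k(t)\|_F \to \infty$ for all $k$. The constant $D'$ then replaces $D$ in the algebraic calculations behind Lemma~\ref{lem:gf_align}, which are not specific to continuous time, yielding $\|W_k\|_F^2 - \|W_k\|_2^2 \le D'$ and $\langle v_{k+1},u_k\rangle^2 \ge 1 - (D' + \|W_{k+1}(0)\|_2^2 + \|W_k(0)\|_2^2)/\|W_{k+1}\|_2^2 \to 1$, which is the third bullet. Finally, since $\partial\cR/\partial W_1 = W_L\cdots W_2\,\nR(\wnn)^\top$, alignment forces $\|W_L\cdots W_2\| \to \infty$ while the telescoped sum forces $\|\nR(W(t))\| \to 0$ along a subsequence, hence $\nR(\wnn) \to 0$; the argument following Lemma~\ref{lem:gf_align} then concludes $\cR(W(t)) \to 0$.

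The main obstacle is controlling the accumulated discretization error in the balancing identity, since an unbounded drift would destroy the alignment argument entirely. The saving grace is precisely the cap $\eta_t \le 1$ in Assumption~\ref{ass:step_size}, which upgrades the telescoping risk bound $\sum_t \eta_t\|\nR\|^2 < \infty$ to $\sum_t \eta_t^2\|\nR\|^2 < \infty$; without this cap, the $O(\eta_t^2)$ per-step discrepancy could accumulate faster than the iterates grow and the whole argument would collapse. A secondary subtlety is checking that the adaptive $R_t$ in Assumption~\ref{ass:step_size} is realizable at every step with a finite line search, which follows from the explicit form of $\beta(R)$ given by Lemma~\ref{lem:nn_smooth} together with Lemma~\ref{lem:gd_escape}.
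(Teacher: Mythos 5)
Your overall strategy matches the paper's: bound the discretization error by $\sum_t \eta_t^2\|\nR(W(t))\|^2 \le \sum_t \eta_t\|\nR(W(t))\|^2 \le 2\cR(W(0))$, derive an approximate balancing identity between adjacent layers (the paper's $P_k,Q_k$ bookkeeping in \Cref{lem:gd_align}), and rerun the alignment algebra with an enlarged constant. The gap is in the passage to risk convergence. You assert that \Cref{lem:gd_escape} together with balancing already yields $\|W_k(t)\|_F\to\infty$ for every $k$, hence $\|W_L\cdots W_2\|\to\infty$, and then extract a subsequence with $\|\nR(W(t))\|\to 0$ to conclude $\nR(\wnn)\to 0$. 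But \Cref{lem:gd_escape} (even applied with $R_t\to\infty$) only shows that $\max_k\|W_k(t)\|_F$ is \emph{unbounded}; with balancing this gives $\limsup_t\|W_k(t)\|_F=\infty$ for each $k$, not $\lim_t\|W_k(t)\|_F=\infty$. In both the gradient-flow proof and the paper's gradient-descent proof, the statement $\|W_k\|_F\to\infty$ is a \emph{consequence} of $\cR(W(t))\to 0$, not an input to it. As written, your subsequence step can fail: along the times where the full gradient is small, the iterate could a priori be in a bounded region, where smallness of $\|\partial\cR/\partial W_1\|_F=\|W_L\cdots W_2\|\,\|\nR(\wnn)\|$ says nothing about $\|\nR(\wnn)\|$, and your claim ``\Cref{lem:gd_escape} already plays the role of \Cref{lem:gf_unbounded}'' is exactly where the missing content hides.

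Concretely, what is missing is the discrete analogue of \Cref{lem:gf_unbounded}, namely the paper's \Cref{lem:gd_unbounded}: (i) $\sum_t\eta_t=\infty$, proved via the step bound $\|W_k(t+1)\|_F\le\|W_k(t)\|_F+1$ (which follows from $\eta_t\|\partial\cR/\partial W_k\|_F\le R_t^{L-1}G/\beta(R_t)\le 1$ under \Cref{ass:step_size} --- this, not the line search per se, is also what legitimizes \cref{eq:gd_dec} at every step) together with Cauchy--Schwarz and unboundedness of the iterates; and (ii) $\sum_{t:\,W(t)\in B(R)}\eta_t<\infty$ for every $R$, proved as in \Cref{lem:gf_unbounded} using monotonicity of the risk. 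You need (i) even to extract your subsequence with $\|\nR(W(t))\|\to 0$ from $\sum_t\eta_t\|\nR(W(t))\|^2<\infty$, and (ii) is what makes the contradiction run without knowing $\|W_k\|_F\to\infty$ in advance: if $\cR(W(t))\ge\epsilon$ for all $t$, then $\|\nR(\wnn)\|\ge c(\epsilon)$; \Cref{lem:gd_align} lower-bounds $\|W_L\cdots W_2\|$ whenever every layer exceeds some $C$; balancing plus (ii) shows the weighted time with some layer below $C$ is finite, so by (i) the weighted time with all layers above $C$ is infinite, forcing $\sum_t\eta_t\|\nR(W(t))\|^2=\infty$, a contradiction with the telescoped bound. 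With \Cref{lem:gd_unbounded} supplied and the logical order corrected ($\cR\to 0$ first, then $\|W_k\|_F\to\infty$, then the alignment bullet), your argument becomes essentially the paper's proof.
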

\begin{theorem}\label{thm:gd_min_norm}
    Under \Cref{ass:init,ass:data,ass:step_size}, for the logistic loss $\llog$ and almost all data, $\lim_{t\to\infty}\envert{\langle v_1(t),\baru\rangle}=1$, and $\lim_{t\to\infty}\nicefrac{\wnn(t)}{\prod_{k=1}^L\|W_k(t)\|_F}=\baru$.
\end{theorem}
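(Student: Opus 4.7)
The plan is to mirror the gradient-flow proof of Theorem~\ref{thm:gf_min_norm} in discrete time, treating Theorem~\ref{thm:gd_risk} as a black box for the gradient-descent analogues of risk convergence, $\|W_k\|_F\to\infty$, and within-product alignment. The two structural ingredients of the continuous argument, Lemma~\ref{lem:strong_conv} (the almost-all fact $\alpha>0$) and Lemma~\ref{lem:perp_bound} (the repulsion property of $\Pi_\perp\nR(w)$), depend only on the loss and the data and transfer verbatim. It suffices to prove $\|\Pi_\perp W_1(t)\|_F/\|W_1(t)\|_F\to 0$: this forces $v_1\to\pm\baru$, and the within-product alignment in Theorem~\ref{thm:gd_risk} converts this into $\wnn/\prod_k\|W_k\|_F\to\baru$, with the sign pinned down because $\cR\to 0$ eventually puts $\wnn$ in the open cone containing $\baru$.

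The first step is to derive the discrete analogue of the gradient-flow identity $\frac{\dif}{\dif t}\|\Pi_\perp W_1\|_F^2=-2\langle\Pi_\perp\wnn,\nR(\wnn)\rangle$. From the update $W_1(t+1)=W_1(t)-\eta_t\,\partial\cR/\partial W_1$ together with $\partial\cR/\partial W_1=W_2^\top\cdots W_L^\top\nR(\wnn)^\top$, applying the row-wise projection $\Pi_\perp$ and expanding the Frobenius norm yields
\begin{equation*}
  \|\Pi_\perp W_1(t+1)\|_F^2-\|\Pi_\perp W_1(t)\|_F^2 = -2\eta_t\,\langle\Pi_\perp\wnn,\nR(\wnn)\rangle+\eta_t^2\,\|\Pi_\perp(\partial\cR/\partial W_1)\|_F^2,
\end{equation*}
where the cross term is identified via cyclic trace and the observation $\Pi_\perp\wnn=(W_L\cdots W_2\Pi_\perp W_1)^\top$. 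For the first-order term I would argue, as in the continuous case, that once $t$ is large enough for the alignment clause of Theorem~\ref{thm:gd_risk} to be tight and $\|\Pi_\perp W_1\|_F$ exceeds a fixed threshold $M$, the lower bound $\|\Pi_\perp\wnn\|\ge c\,\|\Pi_\perp W_1\|_F\cdot\|W_L\cdots W_2\|_2$ (for a constant $c>0$ coming from the alignment) places $\|\Pi_\perp\wnn\|$ comfortably above the Lemma~\ref{lem:perp_bound} threshold $2n/(e\alpha)$, so this term is non-positive.

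The main new obstacle is absorbing the squared-step error. Combining $\eta_t\le 1$ from Assumption~\ref{ass:step_size} with the per-step risk decrease~\eqref{eq:gd_dec} (applicable because the line-search condition keeps both $W(t)$ and $W(t+1)$ in a common smoothness ball) and the trivial bound $\|\partial\cR/\partial W_1\|_F\le\|\nR(W)\|$ gives
\begin{equation*}
  \sum_t\eta_t^2\,\|\partial\cR/\partial W_1\|_F^2 \;\le\; \sum_t\eta_t\,\|\nR(W(t))\|^2 \;\le\; 2\cR(W(0)) \;<\; \infty,
\end{equation*}
by telescoping over the nonnegative, nonincreasing risk. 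Hence $\|\Pi_\perp W_1(t)\|_F^2$ can exceed $M$ by only a uniformly summable amount and is therefore globally bounded; combined with $\|W_1(t)\|_F\to\infty$ this yields $\|\Pi_\perp W_1(t)\|_F/\|W_1(t)\|_F\to 0$. The remaining bookkeeping is to pick a single $M$ valid uniformly in $t$, which uses that the within-product alignment sharpens along the trajectory and that $\|W_L\cdots W_2\|_2/\|W_1\|_F$ stays bounded below for large $t$; both follow from Theorem~\ref{thm:gd_risk}.
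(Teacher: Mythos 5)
Your plan matches the paper's proof in its main ingredients: the discrete recursion for $\|\Pi_{\perp}W_1\|_F^2$ with cross term $-2\eta_t\langle\Pi_{\perp}\wnn,\nR(\wnn)\rangle$, the error control $\sum_t\eta_t^2\|\partial\cR/\partial W_1\|_F^2\le\sum_t\eta_t\|\nR(W(t))\|^2\le2\cR(W(0))$ obtained by telescoping \cref{eq:gd_dec} under \Cref{ass:step_size}, and the use of \Cref{lem:strong_conv,lem:perp_bound} to make the cross term favorable for large $t$. Where you diverge is the endgame: the paper triggers \Cref{lem:perp_bound} from the \emph{ratio} condition $\|\Pi_{\perp}W_1\|_F/\|W_1\|_F\ge\epsilon$ (routed through $\|\Pi_{\perp}v_1\|$ and the alignment of $\wnn$ with $v_1$) and then runs an excursion argument using the eventual monotonicity of $\|W_1\|_F$ and the per-step bound $\|W_1(t+1)\|_F\le\|W_1(t)\|_F+1$ to conclude $\limsup\|\Pi_{\perp}W_1\|_F/\|W_1\|_F\le2\epsilon$; you instead trigger the lemma from an \emph{absolute} threshold $\|\Pi_{\perp}W_1\|_F\ge M$ and conclude that $\|\Pi_{\perp}W_1\|_F$ stays bounded, so the ratio vanishes since $\|W_1\|_F\to\infty$. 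This route works and gives a slightly stronger intermediate conclusion while avoiding the excursion bookkeeping, but two points need repair. First, the inequality $\|\Pi_{\perp}\wnn\|\ge c\,\|\Pi_{\perp}W_1\|_F\,\|W_L\cdots W_2\|_2$ is not correct as stated (the mass of $\Pi_{\perp}W_1$ need not be seen by the direction $p:=(W_L\cdots W_2)^\top$); the usable version has additive corrections: writing $W_1=\sigma_1u_1v_1^\top+S$ with $\|S\|_2^2\le D+2\cR(W(0))$ by \Cref{lem:gd_align}, one gets $\|\Pi_{\perp}\wnn\|\ge\sigma_1\envert{\langle u_1,p\rangle}\,\|\Pi_{\perp}v_1\|-\|S\|_2\|p\|$ and $\sigma_1\|\Pi_{\perp}v_1\|\ge\|\Pi_{\perp}W_1\|_F-\|\Pi_{\perp}S\|_F$, and one also needs $\envert{\langle u_1,p\rangle}\ge\|p\|/2$ for large $t$, which follows from \Cref{lem:gd_align} (alignment of $p$ with $v_2$ and of $v_2$ with $u_1$) rather than from the statement of \Cref{thm:gd_risk} alone; with these corrections a fixed $M$ of order $\sqrt{D+2\cR(W(0))}$ suffices once $\|W_L\cdots W_2\|$ is large, which is the uniform-$M$ bookkeeping you deferred. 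Second, \Cref{lem:perp_bound} also requires $\langle\wnn,\baru\rangle\ge0$, which you never verify; it holds for all large $t$ once $\cR(W(t))\le\ell(0)/n$ (so every margin is nonnegative) via the dual representation $\baru=\sum_{i\in S}\alpha_iz_i$ with $\alpha_i\ge0$ from \Cref{lem:strong_conv}. With these repairs your argument goes through and is a legitimate variant of the paper's proof.
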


\begin{corollary}
  \label{fact:gd_min_norm_2}
    Under \Cref{ass:init,ass:data,ass:step_size}, for the logistic loss $\llog$ and almost all data,
    \[
      \min_i\ y_i \del{ \frac {W_L}{\|W_L\|_F} \cdots \frac {W_1}{\|W_1\|_F} } x_i
      \quad\xrightarrow[t\to\infty]{}\quad
      \max_{\substack{A_L \in \R^{1 \times d_{L-1}}\\\|A_L\|_F = 1}}
      \cdots
      \max_{\substack{A_1 \in \R^{d_1 \times d_0}\\\|A_1\|_F = 1}}
      \min_i\ y_i \del{ A_L \cdots A_1 } x_i.
    \]
\end{corollary}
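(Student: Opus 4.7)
The plan is to show that both sides of the displayed limit equal the maximum margin $\gamma$, which reduces the corollary to \Cref{thm:gd_min_norm}.

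First, I would rewrite the left-hand side. Since $\wnn = (W_L \cdots W_1)^\top$ and $z_i = y_i x_i$, the LHS equals $\min_i \langle \wnn(t)/\prod_{k=1}^L \|W_k(t)\|_F,\ z_i \rangle$. \Cref{thm:gd_min_norm} gives $\wnn(t)/\prod_k \|W_k(t)\|_F \to \baru$, and since the map $w \mapsto \min_i \langle w, z_i\rangle$ is continuous, the LHS converges to $\min_i \langle \baru, z_i \rangle = \gamma$.

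Second, I would show the RHS equals $\gamma$ by matching bounds. For the upper bound, any $(A_1,\ldots,A_L)$ with $\|A_k\|_F = 1$ satisfies $\|A_L \cdots A_1\|_2 \le \prod_k \|A_k\|_2 \le \prod_k \|A_k\|_F = 1$ by submultiplicativity of the operator norm together with $\|\cdot\|_2 \le \|\cdot\|_F$; writing $w := (A_L \cdots A_1)^\top$, this yields $\|w\| \le 1$, hence $\min_i y_i (A_L \cdots A_1) x_i = \min_i \langle w, z_i\rangle \le \max_{\|u\|\le 1} \min_i \langle u, z_i\rangle = \gamma$, where the last equality uses that the optimizer on the unit ball lies on its boundary (since $\gamma > 0$). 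For the lower bound, I would exhibit an explicit maximizer: take $A_1 := e_1 \baru^\top \in \R^{d_1 \times d_0}$, $A_k := e_1 e_1^\top \in \R^{d_k \times d_{k-1}}$ for $2 \le k \le L-1$, and $A_L := e_1^\top \in \R^{1 \times d_{L-1}}$, where each $e_1$ denotes the first standard basis vector in its ambient space. Each such $A_k$ has unit Frobenius norm, and using $e_1^\top e_1 = 1$ at each junction the product collapses to $\baru^\top$, giving $\min_i y_i (A_L \cdots A_1) x_i = \min_i \langle \baru, z_i \rangle = \gamma$.

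Essentially all the substance lives in \Cref{thm:gd_min_norm}; the remaining work is computing the max-margin value under per-layer Frobenius-norm constraints, which the explicit construction dispatches directly. There is accordingly no real obstacle beyond what the cited theorem already overcomes.
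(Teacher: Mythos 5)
Your proposal is correct and matches the paper's argument: the paper likewise reduces to \Cref{thm:gd_min_norm} (via the proof of \Cref{fact:gf_min_norm_2}), bounds the right-hand side above by $\gamma$ using $\|A_L\cdots A_1\|_2\le\prod_k\|A_k\|_F\le 1$, and bounds it below with a rank-one construction $(1\,a_{L-1}^\top)(a_{L-1}a_{L-2}^\top)\cdots(a_1\baru^\top)$, of which your choice $a_k=e_1$ is a special case. The only cosmetic difference is that you phrase the upper bound directly through $\max_{\|u\|\le 1}\min_i\langle u,z_i\rangle=\gamma$ rather than the paper's normalization step, which is equivalent.
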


Proofs of \Cref{thm:gd_risk} and \ref{thm:gd_min_norm} are given in \Cref{sec:gd_app}, and are basically the same as the gradient flow proofs. The key difference is that an error of $\sum_{t=0}^{\infty}\eta_t^2\|\nR(W(t))\|^2$ will be introduced in many parts of the proof. However, it is bounded in light of \cref{eq:gd_dec}:
\begin{equation*}
    \sum_{t=0}^{\infty}\eta_t^2\enVert{\nR\del{W(t)}}^2\le \sum_{t=0}^{\infty}\eta_t\enVert{\nR\del{W(t)}}^2\le2\cR\del{W(0)}.
\end{equation*}
Since all weight matrices go to infinity, such a bounded error does not matter asymptotically, and thus proofs still go through.

\section{Summary and future directions}\label{sec:future}

\begin{figure}[t]
    \centering
    \begin{subfigure}{0.495\textwidth}
        \centering
        \includegraphics[width=\textwidth]{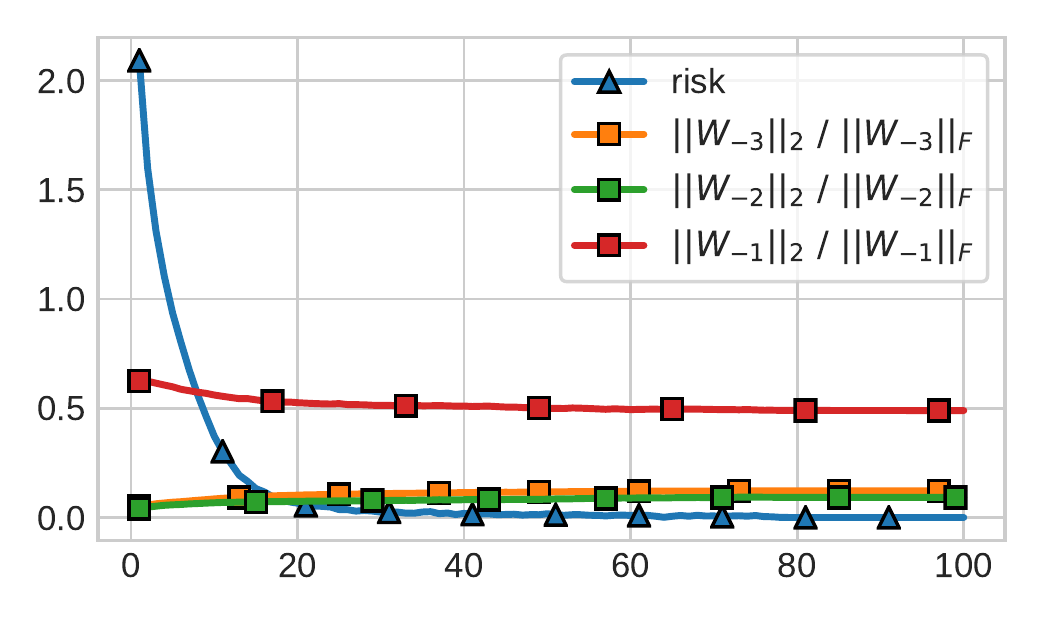}
        \caption{Default initialization.}
        \label{fig:cifar:vanilla}
    \end{subfigure}
    \begin{subfigure}{0.495\textwidth}
        \centering
        \includegraphics[width=\textwidth]{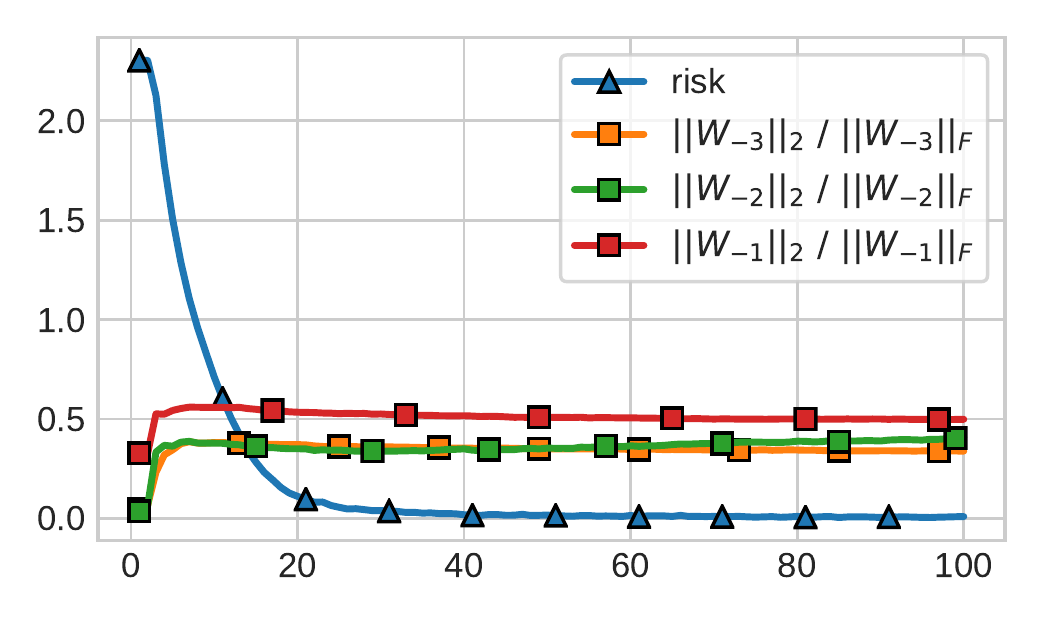}
        \caption{Initialization with the same Frobenius norm.}
        \label{fig:cifar:fancy}
    \end{subfigure}
    \caption{Risk and alignment of dense layers (the ratio $\|W_i\|_2/\|W_i\|_F$) of (nonlinear!) AlexNet on CIFAR-10.
    \Cref{fig:cifar:vanilla} uses default PyTorch initialization,
    while \Cref{fig:cifar:fancy} forces initial Frobenius norms to be equal amongst dense layers.}
    \label{fig:cifar}
\end{figure}

This paper rigorously proves that, for deep linear networks on linearly separable data, gradient flow and gradient descent minimize the risk to $0$, align adjacent weight matrices, and align the first right singular vector of the first layer to the maximum margin solution determined by the data.
There are many potential future directions; a few are as follows.

\paragraph{Convergence rates and practical step sizes.} This paper only proves asymptotic convergence, moreover for adaptive step sizes depending on the current weight matrix norms.  A refined analysis with rates for practical step sizes (e.g., constant step sizes) would allow the algorithm to be compared to other methods which also globally optimize this objective,
would suggest ways to improve step sizes and initialization,
and ideally even exhibit a sensitivity to the network architecture and suggest how it could be improved.

\paragraph{Nonseparable data and nonlinear networks.}
Real-world data is generally not linearly separable,
but nonlinear deep networks can reliably decrease the risk to 0,
even with random labels \citep{rethinking}.
This seems to suggest that a nonlinear notion of separability is at play;
is there some way to adapt the present analysis?

The present analysis is crucially tied to the alignment of weight matrices:
alignment and risk are analyzed simultaneously.  Motivated by this, consider
a preliminary experiment, presented in \Cref{fig:cifar},
where stochastic gradient descent was used to minimize the risk
of a standard AlexNet on CIFAR-10 \citep{alexnet,cifar}.

Even though there are ReLUs, max-pooling layers, and convolutional layers,
the alignment phenomenon is occurring in a reduced form on the dense layers
(the last three layers of the network).  Specifically, despite these weight matrices
having shape $(1024,4096)$, $(4096,4096)$, and $(4096,10)$ the key alignment ratios
$\|W_i\|_2/\|W_i\|_F$
are much larger than their respective lower bounds $(1024^{-1/2}, 4096^{-1/2}, 10^{-1/2})$.
Two initializations were tried: default PyTorch initialization, and a Gaussian initialization
forcing all initial Frobenius norms to be just $4$, which is suggested by the norm preservation
property in the analysis and removes noise in the weights.

\subsection*{Acknowledgements}

The authors are grateful for support from the NSF under grant IIS-1750051.
This grant allowed them to focus on research,
and when combined with an NVIDIA GPU grant,
led to the creation of their beloved GPU machine $\textsc{DutchCrunch}$.

\bibliography{bib}
\bibliographystyle{plainnat}

\clearpage

\appendix

\section{Regarding \Cref{ass:init}}\label{sec:notation_app}

Suppose $W_1(0)=0$ while $W_L(0)\cdots W_2(0)\ne0$. First of all, $W_L(0)\cdots W_1(0)=0$ and thus $\cR\del{W(0)}=\cR(0)$. Moreover,
\begin{equation*}
    \left\langle\nR\del{\wnn(0)},\baru\right\rangle=\frac{1}{n}\sum_{i=1}^{n}\ell'(0)\langle z_i,\baru\rangle\le\ell'(0)\gamma<0,
\end{equation*}
which implies $\nR\del{\wnn(0)}\ne0$ and $\partial\cR/\partial W_1=\del{W_L(0)\cdots W_2(0)}^{\top}\nR\del{\wnn(0)}^{\top}\ne0$.

On the other hand, if $\cR\del{W(0)}>\cR(0)$, gradient flow/descent may never minimize the risk to $0$. For example, suppose the network has two layers, and both the input and output have dimension $1$; the network just computes the dot product of two vectors $w_1$ and $w_2$. Consider minimizing $\cR(w_1,w_2)=\exp\del{-\langle w_1,w_2\rangle}$. If $w_1(0)=-w_2(0)\ne0$, then $\cR\del{w_1(0),w_2(0)}=\exp\del{\|w_1\|^2}>\exp(0)$. It is easy to verify that for any $t$, $w_1(t)=-w_2(t)$, and $\cR\del{w_1(t),w_2(t)}\ge\exp(0)>0$.

\section{Omitted proofs from \Cref{sec:gf}}\label{sec:gf_app}

\begin{proof}[Proof of \Cref{lem:gf_unbounded}]
    Fix an arbitrary $R>0$. If the claim is not true, then for any $\epsilon>0$, there exists some $t\ge1$ such that $\|W_k\|_F\le R$ for all $k$ while $\enVert{\partial\cR/\partial W_1}_F^2\le\epsilon^2$, which means
    \begin{equation*}
        \enVert{\frac{\partial\cR}{\partial W_1}}_F^2=\enVert{W_2^{\top}\cdots W_L^{\top}\nabla\cR(\wnn)^{\top}}_F^2=\enVert{W_L\cdots W_2}^2\enVert{\nabla\cR(\wnn)}^2\le\epsilon^2.
    \end{equation*}
    Since $\|\wnn\|\le R^L$, we have
    \begin{equation*}
        \langle\nR(\wnn),\baru\rangle=\frac{1}{n}\sum_{i=1}^{n}\ell'\del{\langle\wnn,z_i\rangle}\langle z_i,\baru\rangle\le \frac{1}{n}\sum_{i=1}^{n}\ell'\del{\langle\wnn,z_i\rangle}\gamma\le-M\gamma,
    \end{equation*}
    where $-M=\max_{-R^L\le x\le R^L}\ell'(x)$. Since $\ell'$ is continuous and the domain is bounded, the maximum is attained and negative, and thus $M>0$. Therefore $\enVert{\nabla\cR(\wnn)}\ge M\gamma$, and thus $\|W_L\cdots W_2\|\le\epsilon/M\gamma$. Since $\|W_1\|_F\le R$, we further have $\|\wnn\|\le\epsilon R/M\gamma$. In other words,  after $t=1$, $\enVert{\wnn}$ may be arbitrarily small, which implies $\cR\del{\wnn}$ can be arbitrarily close to $\cR\del{0}$.

    On the other hand, by \Cref{ass:init}, $\dif\cR(W)/\dif t=-\|\nR(W)\|^2<0$ at $t=0$. This implies that $\cR\del{W(1)}<\cR\del{W(0)}$, and for any $t\ge1$, $\cR\del{W(t)}\le\cR\del{W(1)}<\cR\del{W(0)}\le\cR(0)$, which is a contradiction.

    Since the risk is always positive, we have
    \begin{align*}
        \cR\del{W(0)} & \ge\int_{t=0}^{\infty}\sum_{k=1}^{L}\enVert{\frac{\partial\cR}{\partial W_k}}_F^2\dif t \\
         & \ge\int_{t=0}^{\infty}\enVert{\frac{\partial\cR}{\partial W_1}}_F^2\dif t \\
         & \ge\int_{t=0}^{\infty}\enVert{\frac{\partial\cR}{\partial W_1}}_F^2\mathds{1}\sbr{\max_{1\le k\le L}\|W_k\|_F\le R}\dif t \\
         & \ge\int_{t=1}^{\infty}\enVert{\frac{\partial\cR}{\partial W_1}}_F^2\mathds{1}\sbr{\max_{1\le k\le L}\|W_k\|_F\le R}\dif t \\
         & \ge\epsilon(R)^2\int_{t=1}^{\infty}\mathds{1}\sbr{\max_{1\le k\le L}\|W_k\|_F\le R}\dif t,
    \end{align*}
    which implies gradient flow only spends a finite amount of time in $\cbr{W\middle|\max_{1\le k\le L}\|W_k\|_F\le R}$. This directly implies that $\max_{1\le k\le L}\|W_k\|_F$ is unbounded.
\end{proof}

\begin{proof}[Proof of \Cref{lem:gf_align}]
    The first claim is true for $k=L$ since $W_L$ is a row vector. For any $1\le k<L$, recall that \citet{arora_icml,jason_nips} give the following relation:
    \begin{equation}\label{eq:adj_layers}
        W_{k+1}^{\top}(t)W_{k+1}(t)-W_{k+1}^{\top}(0)W_{k+1}(0)=W_k(t)W_k^{\top}(t)-W_k(0)W_k^{\top}(0).
    \end{equation}
    Let $A_{k,k+1}=W_{k}(0)W_k^{\top}(0)-W_{k+1}^{\top}(0)W_{k+1}(0)$. By \cref{eq:adj_layers} and the definition of singular vectors and singular values, we have
    \begin{align}
        \sigma_k^2 & \ge v_{k+1}^{\top}W_kW_k^{\top}v_{k+1} \nonumber \\
         & =v_{k+1}^{\top}W_{k+1}^{\top}W_{k+1}v_{k+1}+v_{k+1}^{\top}A_{k,k+1}v_{k+1} \nonumber \\
         & =\sigma_{k+1}^2+v_{k+1}^{\top}A_{k,k+1}v_{k+1} \nonumber \\
         & \ge\sigma_{k+1}^2-\|A_{k,k+1}\|_2. \label{eq:svd_1}
    \end{align}
    Moreover, by taking the trace on both sides of \cref{eq:adj_layers}, we have
    \begin{align}
        \|W_k\|_F^2=\tr\del{W_kW_k^{\top}} & =\tr\del{W_{k+1}^{\top}W_{k+1}}+\tr\del{W_k(0)W_k^{\top}(0)}-\tr\del{W_{k+1}^{\top}(0)W_{k+1}(0)} \nonumber \\
         & =\|W_{k+1}\|_F^2+\|W_k(0)\|_F^2-\|W_{k+1}(0)\|_F^2. \label{eq:ind_2}
    \end{align}
    Summing \cref{eq:svd_1} and \cref{eq:ind_2} from $k$ to $L-1$, we get
    \begin{equation}\label{eq:partial_sum}
        \|W_k\|_F^2-\|W_k\|_2^2\le\|W_k(0)\|_F^2-\|W_L(0)\|_F^2+\sum_{k'=k}^{L-1}\|A_{k',k'+1}\|_2\le D.
    \end{equation}

    Next we prove that singular vectors get aligned. Consider $u_k^{\top}W_{k+1}^{\top}W_{k+1}u_k$. On one hand, similar to \cref{eq:svd_1}, we can get that
    \begin{align}\label{eq:svd_2}
        u_k^{\top}W_{k+1}^{\top}W_{k+1}u_k & =u_k^{\top}W_kW_k^{\top}u_k-u_k^{\top}W_k(0)W_k^{\top}(0)u_k+u_k^{\top}W_{k+1}^{\top}(0)W_{k+1}(0)u_k \nonumber \\
         & \ge u_k^{\top}W_kW_k^{\top}u_k-u_k^{\top}W_k(0)W_k^{\top}(0)u_k \nonumber \\
         & \ge\sigma_k^2-\|W_k(0)\|_2^2.
    \end{align}
    On the other hand, it follows from the definition of singular vectors and \cref{eq:partial_sum} that
    \begin{align}
        u_k^{\top}W_{k+1}^{\top}W_{k+1}u_k & =\langle u_k,v_{k+1}\rangle^2\sigma_{k+1}^2+u_k^{\top}\del{W_{k+1}^{\top}W_{k+1}-v_{k+1}\sigma_{k+1}^2v_{k+1}^{\top}}u_k \nonumber \\
         & \le \langle u_k,v_{k+1}\rangle^2\sigma_{k+1}^2+\|W_{k+1}\|_F^2-\|W_{k+1}\|_2^2 \nonumber \\
         & \le \langle u_k,v_{k+1}\rangle^2\sigma_{k+1}^2+D. \label{eq:svd_3}
    \end{align}
    Combining \cref{eq:svd_2} and \cref{eq:svd_3}, we get
    \begin{align}
        \sigma_k^2 & \le \langle u_k,v_{k+1}\rangle^2\sigma_{k+1}^2+D+\|W_k(0)\|_2^2. \label{eq:dp_prelim}
    \end{align}
    Similar to \cref{eq:svd_2}, we can get
    \begin{align*}
        \sigma_k^2\ge v_{k+1}^{\top}W_kW_k^{\top}v_{k+1}\ge\sigma_{k+1}^2-\|W_{k+1}(0)\|_2^2.
    \end{align*}
    Therefore
    \begin{equation}\label{eq:sv_ratio}
        \frac{\sigma_k^2}{\sigma_{k+1}^2}\ge1-\frac{\|W_{k+1}(0)\|_2^2}{\sigma_{k+1}^2}.
    \end{equation}
    Combining \cref{eq:dp_prelim} and \cref{eq:sv_ratio}, we finally get
    \begin{equation*}
        \langle u_k,v_{k+1}\rangle^2\ge1-\frac{D+\|W_k(0)\|_2^2+\|W_{k+1}(0)\|_2^2}{\sigma_{k+1}^2}.
    \end{equation*}

    Regarding the last claim, first recall that since the difference between the squares of Frobenius norms of any two layers is a constant, $\max_{1\le k\le L}\|W_k\|_F\to\infty$ implies $\|W_k\|_F\to\infty$ for any $k$. We further have the following.
    \begin{itemize}
        \item Since $\|W_k\|_F^2-\|W_k\|_2^2\le D$, $\|W_k\|_2\to\infty$ for any $k$, and $W_k/\|W_k\|_F\to u_kv_k^{\top}$.
        \item Since $\|W_k\|_2\to\infty$, $|\langle u_k,v_{k+1}\rangle|\to1$.
    \end{itemize}
     As a result,
     \begin{align*}
         \envert{\left\langle \frac{\wnn}{\prod_{k=1}^L\|W_k\|_F},v_1\right\rangle} & =\envert{\left\langle \prod_{k=1}^{L}\frac{W_k}{\|W_k\|_F},v_1\right\rangle} \\
          & \to\envert{\left\langle \prod_{k=1}^{L}u_iv_i^{\top},v_1\right\rangle} \\
          & \to1.
     \end{align*}
\end{proof}

\begin{proof}[Proof of \Cref{thm:gf_risk}]
    Suppose for some $\epsilon>0$, $\cR\del{W}\ge\epsilon$ for any $t$. Then there exists some $1\le j\le n$ such that $\ell\del{\langle\wnn,z_j\rangle}\ge\epsilon$, and thus $\langle\wnn,z_j\rangle\le\ell^{-1}(\epsilon)$. On the other hand, since $\cR(W)\le\cR(0)=\ell(0)$, $\ell\del{\langle\wnn,z_j\rangle}\le n\ell(0)$, and thus $\langle\wnn,z_j\rangle\ge\ell^{-1}\del{n\ell(0)}$. Let $-M=\max_{\ell^{-1}\del{n\ell(0)}\le x\le \ell^{-1}(\epsilon/n)}\ell'(x)<0$, we have for any $t$,
    \begin{align*}
        \langle\nR(\wnn),\baru\rangle & =\frac{1}{n}\sum_{i=1}^{n}\ell'\del{\langle\wnn,z_i\rangle}\langle z_i,\baru\rangle \\
         & \le \frac{1}{n}\sum_{i=1}^{n}\ell'\del{\langle\wnn,z_i\rangle}\gamma \\
         & \le \frac{1}{n}\ell'\del{\langle\wnn,z_j\rangle}\gamma \\
         & \le \frac{-M\gamma}{n}<0,
    \end{align*}
    and thus $\|\nR(\wnn)\|\ge M\gamma/n$.

    Similar to the proof of \Cref{lem:gf_align}, we can show that if $\|W_k\|_F\to\infty$,
    \begin{equation*}
        \envert{\left\langle \frac{(W_L\cdots W_2)^{\top}}{\|W_k\|_F\cdots\|W_2\|_F},v_2\right\rangle}\to1.
    \end{equation*}
    In other words, there exists some $C>0$, such that when $\min_{1\le k\le L}\|W_k\|_F>C$, $\|W_L\cdots W_2\|\ge\|W_k\|_F\cdots\|W_2\|_F/2>C^L/2$.

    \Cref{lem:gf_unbounded} shows that gradient flow spends a finite amount of time in $\cbr{W\middle|\max_{1\le k\le L}\|W_k\|_F\le R}$ for any $R>0$. Since the difference between the squares of Frobenius norms of any two layers is a constant, gradient flow also spends a finite amount of time in $\cbr{W\middle|\min_{1\le k\le L}\|W_k\|_F\le C}$. Now we have
    \begin{align*}
        \cR\del{W(0)} & \ge\int_{t=0}^{\infty}\sum_{k=1}^{L}\enVert{\frac{\partial\cR}{\partial W_k}}_F^2\dif t \\
         & \ge\int_{t=0}^{\infty}\enVert{\frac{\partial\cR}{\partial W_1}}_F^2\dif t \\
         & =\int_{t=0}^{\infty}\|W_L\cdots W_2\|^2\|\nR(\wnn)\|^2\dif t \\
         & \ge\int_{t=0}^{\infty}\|W_L\cdots W_2\|^2\|\nR(\wnn)\|^2\mathds{1}\sbr{W\middle|\min_{1\le k\le L}\|W_k\|_F>C}\dif t \\
         & \ge\del{\frac{M\gamma}{n}}^2\del{\frac{C^L}{2}}^2\int_{t=0}^{\infty}\mathds{1}\sbr{W\middle|\min_{1\le k\le L}\|W_k\|_F>C}\dif t \\
         & =\infty,
    \end{align*}
    which is a contradiction. Therefore $\cR(\epsilon)\to0$. This further implies $\|W_k\|_F\to\infty$, since $\cR(W)$ has no finite optimum. Finally, invoking \Cref{lem:gf_align} proves the final claim of \Cref{thm:gf_risk}.
\end{proof}

\begin{proof}[Proof of \Cref{lem:strong_conv}]
    \cite{nati_iclr} Lemma 12 proves that, with probability $1$, there are at most $d$ support vectors, and moreover, the $i$-th support vector $z_i$ has a positive dual variable $\alpha_i$, such that $\sum_{i\in S}^{}\alpha_iz_i=\baru$.

    Suppose there exists some $\xi\perp\baru$, such that $\max_{i\in S}\langle\xi,z_i\rangle\le0$. Since
    \begin{equation*}
        \sum_{i\in S}^{}\alpha_i \langle\xi,z_i\rangle=\left\langle\xi,\sum_{i\in S}^{}\alpha_iz_i\right\rangle=\langle\xi,\baru\rangle=0,
    \end{equation*}
    we actually have $\langle\xi,z_i\rangle=0$ for all $i\in S$. This is impossible under \Cref{ass:data}, since the support vectors span the whole space.
\end{proof}

\begin{proof}[Proof of \Cref{lem:perp_bound}]
    For the sake of presentation, we leave out the subscript in $z_i$ and denote a data point by $z$ generally. For any data point $z$ and predictor $w$, let $z_{\perp}$ and $w_{\perp}$ denote their projection onto $\baru^{\perp}$. Let $z'\in\arg\max_{i\in S}\langle -w_{\perp},z\rangle$, and thus $\langle -w_{\perp},z'\rangle\ge\alpha\|w_{\perp}\|$.

    For $\lexp$, we have
    \begin{align}
        \langle w_{\perp},\nabla\cR(w)\rangle & =\sum_{z}^{}\frac{1}{n}\sbr{-\exp\del{-\langle w,z\rangle}}\langle w_{\perp},z_{\perp}\rangle \nonumber \\
         & =\sum_{z}^{}\frac{1}{n}\sbr{\exp\del{\langle-w,z\rangle}}\langle-w_{\perp},z_{\perp}\rangle \nonumber \\
         & \ge\frac{1}{n}\exp\del{\langle-w,z'\rangle}\langle-w_{\perp},z'_{\perp}\rangle+\sum_{\langle z_{\perp},w_{\perp}\rangle\ge0}^{}\frac{1}{n}\exp\del{\langle-w,z\rangle}\langle-w_{\perp},z_{\perp}\rangle. \label{eq:perp_bound_tmp1}
    \end{align}
    The first part can be lower bounded as below (recall that $\langle -w_{\perp},z'_{\perp}\rangle=\langle -w_{\perp},z'\rangle\ge\alpha\|w_{\perp}\|$)
    \begin{align}
        \frac{1}{n}\exp\del{\langle-w,z'\rangle}\langle-w_{\perp},z'_{\perp}\rangle & =\frac{1}{n}\exp\del{\langle -w,\gamma\bar{u}\rangle}\exp\del{\langle-w_{\perp},z'_{\perp}\rangle}\langle-w_{\perp},z'_{\perp}\rangle \nonumber \\
         & \ge \frac{1}{n}\exp\del{-\langle w,\gamma\bar{u}\rangle}\exp\del{\alpha\|w_{\perp}\|}\alpha\|w_{\perp}\|. \label{eq:perp_bound_tmp2}
    \end{align}

    To bound the second part, first notice that since we assume $\langle w,\bar{u}\rangle\ge0$, for any $z$,
    \begin{equation}\label{eq:perp_bound_tmp3}
        \langle w,z-\gamma\baru\rangle=\langle w,z_{\perp}\rangle+\langle w,z-\gamma\baru-z_{\perp}\rangle\ge \langle w,z_{\perp}\rangle=\langle w_{\perp},z_{\perp}\rangle.
    \end{equation}
    The reason is that every data point has margin at least $\gamma$, and thus $z-\gamma\baru-z_{\perp}=c\baru$ for some $c\ge0$. Using \cref{eq:perp_bound_tmp3}, we can bound the second part of \cref{eq:perp_bound_tmp1}.
    \begin{align}
         & \sum_{\langle z_{\perp},w_{\perp}\rangle\ge0}^{}\frac{1}{n}\exp\del{\langle-w,z\rangle}\langle-w_{\perp},z_{\perp}\rangle \nonumber \\
        = & \sum_{\langle z_{\perp},w_{\perp}\rangle\ge0}^{}\frac{1}{n}\exp\del{\langle-w,\gamma\bar{u}\rangle}\exp\del{\langle -w,z-\gamma\bar{u}\rangle}\langle-w_{\perp},z_{\perp}\rangle \nonumber \\
        \ge & \sum_{\langle z_{\perp},w_{\perp}\rangle\ge0}^{}\frac{1}{n}\exp\del{\langle-w,\gamma\bar{u}\rangle}\exp\del{\langle -w_{\perp},z_{\perp}}\langle-w_{\perp},z_{\perp}\rangle \nonumber \\
        \ge & \sum_{\langle z_{\perp},w_{\perp}\rangle\ge0}^{}\frac{1}{n}\exp\del{\langle-w,\gamma\bar{u}\rangle}\del{-\frac{1}{e}} \nonumber \\
        \ge & \quad\exp\del{\langle-w,\gamma\bar{u}\rangle}\del{-\frac{1}{e}}. \label{eq:perp_bound_tmp4}
    \end{align}
    On the third line \cref{eq:perp_bound_tmp3} is applied. The fourth line applies the property that $f(x)=-xe^{-x}\ge-1/e$ when $x\ge0$.

    Combining \cref{eq:perp_bound_tmp1}, \cref{eq:perp_bound_tmp2} and \cref{eq:perp_bound_tmp4}, we get
    \begin{equation*}
        \langle w_{\perp},\nR(w)\rangle\ge\exp\del{\langle-w,\gamma\bar{u}\rangle}\del{\frac{1}{n}\exp\del{\alpha\|w_{\perp}\|}\alpha\|w_{\perp}\|-\frac{1}{e}}.
    \end{equation*}
    As long as $\|w_{\perp}\|\ge(1+\ln(n))/\alpha$, $\langle w_{\perp},\nR(w)\rangle\ge0$.

    For $\llog$, similar to \cref{eq:perp_bound_tmp1}, we have
    \begin{align}
        \langle w_{\perp},\nabla\cR(w)\rangle & \ge \frac{1}{n}\frac{\exp\del{\langle-w,z'\rangle}}{1+\exp\del{\langle-w,z'\rangle}}\langle-w_{\perp},z'_{\perp}\rangle+\sum_{\langle z_{\perp},w_{\perp}\rangle\ge0}^{}\frac{1}{n}\frac{\exp\del{\langle-w,z\rangle}}{1+\exp\del{\langle-w,z\rangle}}\langle-w_{\perp},z_{\perp}\rangle \nonumber\\
        & \ge \frac{1}{n}\frac{\exp\del{\langle-w,z'\rangle}}{1+\exp\del{\langle-w,z'\rangle}}\langle-w_{\perp},z'_{\perp}\rangle+\sum_{\langle z_{\perp},w_{\perp}\rangle\ge0}^{}\frac{1}{n}\exp\del{\langle-w,z\rangle}\langle-w_{\perp},z_{\perp}\rangle. \label{eq:perp_bound_tmp5}
    \end{align}
    The second part of \cref{eq:perp_bound_tmp5} can be bounded again by \cref{eq:perp_bound_tmp4}. To bound the first part of \cref{eq:perp_bound_tmp5}, first notice that (recall $\langle w,\baru\rangle\ge0$)
    \begin{equation}\label{eq:perp_bound_tmp6}
        \exp\del{\langle-w,z'\rangle}=\exp\del{\langle -w,\gamma\bar{u}\rangle}\exp\del{\langle-w_{\perp},z'_{\perp}\rangle}\le\exp\del{\langle-w_{\perp},z'_{\perp}\rangle}.
    \end{equation}
    Using \cref{eq:perp_bound_tmp6}, and recall that $\langle -w_{\perp},z'_{\perp}\rangle=\langle -w_{\perp},z'\rangle\ge\alpha\|w_{\perp}\|\ge0$, we can bound the first part of \cref{eq:perp_bound_tmp5} as below.
    \begin{align}
        \frac{1}{n}\frac{\exp\del{\langle-w,z'\rangle}}{1+\exp\del{\langle-w,z'\rangle}}\langle-w_{\perp},z'_{\perp}\rangle & =\frac{1}{n}\exp\del{\langle -w,\gamma\bar{u}\rangle}\frac{\exp\del{\langle-w_{\perp},z'_{\perp}\rangle}}{1+\exp\del{\langle-w,z'\rangle}}\langle-w_{\perp},z'_{\perp}\rangle \nonumber \\
         & \ge \frac{1}{n}\exp\del{\langle -w,\gamma\bar{u}\rangle}\frac{\exp\del{\langle-w_{\perp},z'_{\perp}\rangle}}{1+\exp\del{\langle-w_{\perp},z'_{\perp}\rangle}}\langle-w_{\perp},z'_{\perp}\rangle \nonumber \\
         & \ge \frac{1}{2n}\exp\del{\langle -w,\gamma\bar{u}\rangle}\langle-w_{\perp},z'_{\perp}\rangle \nonumber \\
         & \ge \frac{1}{2n}\exp\del{\langle -w,\gamma\bar{u}\rangle}\alpha\|w_{\perp}\|. \label{eq:perp_bound_tmp7}
    \end{align}
    Combining \cref{eq:perp_bound_tmp5}, \cref{eq:perp_bound_tmp7} and \cref{eq:perp_bound_tmp4}, we get
    \begin{equation*}
        \langle w_{\perp},\nR(w)\rangle\ge\exp\del{\langle-w,\gamma\bar{u}\rangle}\del{\frac{1}{2n}\alpha\|w_{\perp}\|-\frac{1}{e}}.
    \end{equation*}
    As long as $\|w_{\perp}\|\ge2n/e\alpha$, $\langle w_{\perp},\nR(w)\rangle\ge0$.
\end{proof}

\begin{proof}[Proof of \Cref{thm:gf_min_norm}]
    Recall that
    \begin{equation*}
        \frac{\dif W_1}{\dif t}=-\frac{\partial\cR}{\partial W_1}=-W_2^{\top}\cdots W_L^{\top}\nR(\wnn)^{\top},
    \end{equation*}
    and thus
    \begin{equation}\label{eq:w1_speed}
        \frac{\dif\|W_1\|_F^2}{\dif t}=\left\langle W_1,\frac{\dif W_1}{\dif t}\right\rangle=-2\langle\wnn,\nR(\wnn)\rangle.
    \end{equation}

    Let $\Pi_{\baru}$ denote the projection onto $\mathrm{span}(\baru)$, and let $\Pi_{\perp}$ denote the projection onto $\baru^{\perp}$. Also let $\Pi_{\baru}W_1$ and $\Pi_{\perp}W_1$ denote the projection of rows of $W_1$ onto $\mathrm{span}(\baru)$ and $\baru^{\perp}$, respectively. Notice that
    \begin{equation*}
        \Pi_{\baru}\wnn=\del{W_L\cdots W_2(\Pi_{\baru}W_1)}^{\top},\quad\textrm{and}\quad\Pi_{\perp}\wnn=\del{W_L\cdots W_2(\Pi_{\perp}W_1)}^{\top}.
    \end{equation*}
    We further have
    \begin{equation}\label{eq:perp_speed}
        \frac{\dif\|\Pi_{\perp}W_1\|_F^2}{\dif t}=-2\langle\Pi_{\perp}\wnn,\nR(\wnn)\rangle.
    \end{equation}

    Let $W_1=u_1\sigma_1v_1^{\top}+S$. We have $\|S\|_2\le\sigma_{1,2}\le\sqrt{\sigma_{1,2}^2}\le\sqrt{\|W_1\|_F^2-\|W_1\|_2^2}\le\sqrt{D}$, where $\sigma_{1,2}$ is the second singular value of $W_1$ and $D$ is the constant introduced in \Cref{lem:gf_align}. Then
    \begin{equation*}
        \Pi_{\perp}W_1=u_1\sigma_1\del{\Pi_{\perp}v_1}^{\top}+\Pi_{\perp}S,
    \end{equation*}
    and
    \begin{equation*}
        \|\Pi_{\perp}W_1\|_F\le\enVert{u_1\sigma_1\del{\Pi_{\perp}v_1}^{\top}}_F+\|\Pi_{\perp}S\|_F=\sigma_1\|\Pi_{\perp}v_1\|+\|\Pi_{\perp}S\|_F\le\sigma_1\|\Pi_{\perp}v_1\|+\sqrt{dD}.
    \end{equation*}
    It follows that
    \begin{equation}\label{eq:min_norm_tmp1}
        \|\Pi_{\perp}v_1\|\ge \frac{\|\Pi_{\perp}W_1\|_F}{\sigma_1}-\frac{\sqrt{dD}}{\sigma_1}\ge \frac{\|\Pi_{\perp}W_1\|_F}{\|W_1\|_F}-\frac{\sqrt{dD}}{\|W_1\|_2}.
    \end{equation}

    Fix an arbitrary $\epsilon>0$. By \Cref{thm:gf_risk}, we can find some $t_0$ large enough such that for any $t\ge t_0$:
    \begin{enumerate}
        \item $\sqrt{dD}/\|W_1\|_2\le\epsilon/3$.
        \item $\|\nicefrac{\wnn}{\|W_L\|_F\cdots\|W_1\|_F}-v_1\|\le\epsilon/3$, or $\|\nicefrac{\wnn}{\|W_L\|_F\cdots\|W_1\|_F}+v_1\|\le\epsilon/3$.
        \item $\|W_L\|_F\cdots\|W_1\|_F\ge 3K/\epsilon$, where $K$ is the threshold given in \Cref{lem:perp_bound}, i.e., $\nicefrac{1+\ln(n)}{\alpha}$ for $\lexp$, $\nicefrac{2n}{e\alpha}$ for $\llog$.
        \item $\cR(W)\le\ell(0)/n$, which implies $\langle\wnn,z_i\rangle\ge0$ for all $1\le i\le n$. By \Cref{lem:strong_conv}, there always exists a support vector $z$ for which $\langle\Pi_{\perp}\wnn,z\rangle\le0$, and therefore $\langle\wnn,\baru\rangle\ge0$.
    \end{enumerate}

    Suppose for some $t\ge t_0$, $\|\Pi_{\perp}W_1\|_F/\|W_1\|_F\ge\epsilon$. By \cref{eq:min_norm_tmp1} and bullet $1$ above, $\|\Pi_{\perp}v_1\|\ge2\epsilon/3$. Bullet $2$ above then gives $\|\nicefrac{\Pi_{\perp}\wnn}{\|W_L\|_F\cdots\|W_1\|_F}\|\ge\epsilon/3$, which together with bullet $3$ above implies $\|\Pi_{\perp}\wnn\|\ge K$. Since also $\langle\wnn,\baru\rangle\ge0$, we can apply \Cref{lem:perp_bound} and get that $\langle\Pi_{\perp}\wnn,\nR(\wnn)\rangle\ge0$. In light of \cref{eq:perp_speed}, $\dif\|\Pi_{\perp}W_1\|_F^2/\dif t\le0$.

    On the other hand, since after $t\ge t_0$, $\langle\wnn,z_i\rangle\ge0$, we have $\dif\|W_1\|_F^2/\dif t\ge0$ by \cref{eq:w1_speed}. Therefore $\|\Pi_{\perp}W_1\|_F/\|W_1\|_F$ will not increase, and since $\|W_1\|_F\to\infty$, it will eventually drop below $\epsilon$, and will never exceed $\epsilon$ again. Therefore,
    \begin{equation*}
        \lim\sup_{t\to\infty}\frac{\|\Pi_{\perp}W_1\|_F}{\|W_1\|_F}\le\epsilon.
    \end{equation*}
    Since $\epsilon$ is arbitrary, we have
    \begin{equation*}
        \lim\sup_{t\to\infty}\frac{\|\Pi_{\perp}W_1\|_F}{\|W_1\|_F}=0,
    \end{equation*}
    and thus $\lim_{t\to\infty}\envert{\langle v_1,\baru\rangle}=1$. An application of \Cref{thm:gf_risk} gives the other part of \Cref{thm:gf_min_norm}.
\end{proof}

\begin{proof}[Proof of \Cref{fact:gf_min_norm_2}]
  By \Cref{thm:gf_min_norm},
  \[
    \min_i y_i \del{ \frac {W_L}{\|W_L\|_F} \cdots \frac {W_1}{\|W_1\|_F} } x_i
    \quad\longrightarrow\quad
    \min_i y_i \baru^\top x_i
   .
  \]
  Next, pick arbitrary unit vectors $a_i \in \R^{d_i}$, and note
  \begin{align*}
    \max_{\substack{A_1 \in \R^{d_1 \times d_0}\\\|A_1\|_F = 1}}
      \cdots
      \max_{\substack{A_L \in \R^{1 \times d_{L-1}}\\\|A_L\|_F = 1}}
    \min_i y_i \del{ A_L \cdots A_1 } x_i
    &\geq
    \min_i y_i \del{ (1 a_{L-1}^\top) (a_{L-1} a_{L-2}^\top) \cdots (a_2a_1^\top)(a_1\baru^\top) } x_i
    \\
    &=
    \min_i y_i \baru^\top x_i.
  \end{align*}
  On the other hand, if matrices $(A_L,\ldots,A_1)$ are feasible, then
  \[
    \|A_L \cdots A_1\|_2 \leq \|A_L\|_2 \cdots \|A_2\|_2 \|A_1\|_F \leq \|A_L\|_F \cdots \|A_1\|_F \leq 1,
  \]
  whereby
  \begin{align*}
      \min_i y_i (A_L\cdots A_1) x_i & \leq \|A_L \cdots A_1\|_2\min_i y_i \del{ \frac {A_L \cdots A_1}{\|A_L \cdots A_1\|_2} }  x_i \\
       & \leq 1 \cdot \max_{\|w\|_2 = 1} \min_i y_i w^\top x_i \\
       & =\min_i y_i \baru^\top x_i.
  \end{align*}
\end{proof}

\section{Omitted proofs from \Cref{sec:gd}}\label{sec:gd_app}

\begin{proof}[Proof of \Cref{lem:nn_smooth}]
    Given $W,V\in B(R)$, we need to show that $\|\nR(W)-\nR(V)\|\le\beta(R)\|W-V\|$ for some $\beta(R)$.

    Consider $k=1$ first. Let $w=(W_L\cdots W_1)^{\top}$, and $v=(V_L\cdots V_1)^{\top}$. Since $|\ell'|\le G$, $\|\nR(w)\|,\|\nR(v)\|\le G$. We have
    \begin{align}
        \enVert{\frac{\partial\cR}{\partial W_1}-\frac{\partial\cR}{\partial V_1}} & =\enVert{W_2^{\top}\cdots W_L^{\top}\nR(w)-V_2^{\top}\cdots V_L^{\top}\nR(v)} \nonumber \\
         & \le\enVert{W_2^{\top}\cdots W_L^{\top}\nR(w)-V_2^{\top}W_3^{\top}\cdots W_L^{\top}\nR(w)} \nonumber \\
         & \quad+\enVert{V_2^{\top}W_3^{\top}\cdots W_L^{\top}\nR(w)-V_2^{\top}\cdots V_L^{\top}\nR(v)} \nonumber \\
         & \le R^{L-2}G\|W_2-V_2\| \nonumber \\
         & \quad+\enVert{V_2^{\top}W_3^{\top}\cdots W_L^{\top}\nR(w)-V_2^{\top}\cdots V_L^{\top}\nR(v)} \nonumber \\
         & \le R^{L-2}G\|W-V\| \nonumber \\
         & \quad+\enVert{V_2^{\top}W_3^{\top}\cdots W_L^{\top}\nR(w)-V_2^{\top}\cdots V_L^{\top}\nR(v)}. \label{eq:smooth_tmp1}
    \end{align}
    Proceeding in this way, we can get
    \begin{equation}\label{eq:smooth_tmp2}
        \enVert{\frac{\partial\cR}{\partial W_1}-\frac{\partial\cR}{\partial V_1}}\le(L-1)R^{L-2}G\|W-V\|+R^{L-1}\|\nR(w)-\nR(v)\|.
    \end{equation}
    Since $\|z_i\|\le1$, $\ell'$ is $\beta$-Lipschitz, we have
    \begin{equation}\label{eq:smooth_tmp3}
        \|\nR(w)-\nR(v)\|\le\beta\|w-v\|\le\beta LR^{L-1}\|W-V\|,
    \end{equation}
    where the last inequality follows from a similar one-by-one replacement procedure as in \cref{eq:smooth_tmp1}. Combining \cref{eq:smooth_tmp2} and \cref{eq:smooth_tmp3}, we get for $R\ge1$,
    \begin{equation*}
        \enVert{\frac{\partial\cR}{\partial W_1}-\frac{\partial\cR}{\partial V_1}}\le\del{(L-1)R^{L-2}G+\beta LR^{2L-2}}\|W-V\|\le2LR^{2L-2}(\beta+G)\|W-V\|.
    \end{equation*}
    The same procedure can be done for other layers, and together
    \begin{equation*}
        \|\nR(W)-\nR(V)\|\le2L^2R^{2L-2}(\beta+G)\|W-V\|.
    \end{equation*}
\end{proof}

\begin{proof}[Proof of \Cref{lem:gd_escape}]
    Recall that if $W(t),W(t+1)\in B(R)$ and $\eta_t=1/\beta(R)$,
    \begin{align}
        \cR\del{W(t+1)}-\cR\del{W(t)} & \le \langle\nR\del{W(t)},-\eta_t\nR\del{W(t)}\rangle+\frac{\beta(R)\eta_t^2}{2}\enVert{\nR\del{W(t)}}^2 \nonumber \\
         & =-\frac{1}{2\beta(R)}\enVert{\nR\del{W(t)}}^2 \nonumber \\
         & =-\frac{\eta_t}{2}\enVert{\nR\del{W(t)}}^2. \label{eq:smooth_risk_drop}
    \end{align}

    Suppose $W(t)\in B(R)$ for all $t$. By \Cref{ass:init} and \cref{eq:smooth_risk_drop},
    \begin{equation*}
        \cR\del{W(1)}\le\cR\del{W(0)}-\frac{1}{2\beta(R)}\enVert{\nR\del{W(0)}}^2<\cR\del{W(0)}.
    \end{equation*}
    By \cref{eq:smooth_risk_drop}, gradient descent never increases the risk, and thus for all $t\ge1$, $\cR\del{W(t)}\le\cR\del{W(1)}<\cR\del{W(0)}$. In exactly the same way as in the proof of \Cref{lem:gf_unbounded}, one can show that there exists some constant $\epsilon(R)>0$, so that $\|\partial\cR/\partial W_1(t)\|_F\ge\epsilon(R)$ for all $t$. Invoking \cref{eq:smooth_risk_drop} again, we will get
    \begin{equation*}
        \cR\del{W(0)}\ge \sum_{t=0}^{\infty}\frac{1}{2\beta(R)}\epsilon(R)^2=\infty,
    \end{equation*}
    which is a contradiction. Therefore $W(t)$ must go out of $B(R)$ at some time.
\end{proof}

Next we prove \Cref{thm:gd_risk} and \ref{thm:gd_min_norm}. The proofs depend on several lemmas which are similar to the gradient flow ones. The following \Cref{lem:gd_unbounded} is similar to \Cref{lem:gf_unbounded}.

\begin{lemma}\label{lem:gd_unbounded}
    Under \Cref{ass:loss}, \ref{ass:init}, \ref{ass:smooth}, and \ref{ass:step_size}, gradient descent ensures that
    \begin{itemize}
        \item $\max_{1\le k\le L}\|W_k(t)\|_F$ is unbounded.
        \item $\sum_{t=0}^{\infty}\eta_t=\infty$.
        \item For any $R>0$, $\sum_{t:W(t)\in B(R)}^{}\eta_t<\infty$.
    \end{itemize}
\end{lemma}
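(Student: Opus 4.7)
The plan is to set up a per-step descent inequality and gradient lower bound, and then establish the three bullets in the order third, first, second, closely mirroring the gradient-flow proof of \Cref{lem:gf_unbounded}, with the additional complication that the adaptive step size $\eta_t=\min\{1/\beta(R_t),1\}$ may shrink as the iterates grow. To set up the descent inequality, note that by \Cref{ass:step_size}, $W(t)\in B(R_t-1)$, and using $|\ell'|\le G$ one obtains the crude bound $\|\nabla\cR(W(t))\|\le \sqrt{L}\,R_t^{L-1}G$; combined with $\eta_t\le 1/\beta(R_t)$ and the definition of $\beta(R_t)$ in \Cref{lem:nn_smooth}, this gives $\|W(t+1)-W(t)\|_F\le 1$, so $W(t+1)\in B(R_t)$. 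The $\beta(R_t)$-smoothness of \Cref{lem:nn_smooth} on $B(R_t)$ then produces $\cR(W(t+1))-\cR(W(t))\le -(\eta_t/2)\|\nabla\cR(W(t))\|^2$ exactly as in \cref{eq:gd_dec}, which together with $\nabla\cR(W(0))\neq 0$ from \Cref{ass:init} yields $\cR(W(t))\le\cR(W(1))<\cR(W(0))\le\cR(0)=\ell(0)$ for every $t\ge 1$. Next I would repeat the gradient lower bound in the proof of \Cref{lem:gf_unbounded} essentially verbatim: for every $R>0$ there exists $\epsilon(R)>0$ so that $\|\partial\cR/\partial W_1\|_F\ge\epsilon(R)$ whenever $t\ge 1$ and $W(t)\in B(R)$, since otherwise $\|\wnn\|\le R^L$ combined with $\|\nabla\cR(\wnn)\|\ge M\gamma$ (Cauchy-Schwarz against $\baru$) would force $\|\wnn\|$ arbitrarily small and $\cR(W(t))$ arbitrarily close to $\cR(0)$, contradicting the strict decrease.

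For the third bullet, summing the descent inequality and restricting to indices with $t\ge 1$ and $W(t)\in B(R)$ gives $\cR(W(0))\ge \sum_{t\ge 1,\,W(t)\in B(R)}(\eta_t/2)\epsilon(R)^2$, hence $\sum_{t:W(t)\in B(R)}\eta_t<\infty$ (the $t=0$ term contributes at most $\eta_0\le 1$). For the first bullet, if $\max_k\|W_k(t)\|_F\le R$ for all $t$, then \Cref{ass:step_size} forces $R_t$ bounded and $\eta_t\ge \eta^\ast:=\min\{1/\beta(R+1),1\}>0$, so $\sum_{t:W(t)\in B(R)}\eta_t=\sum_t\eta_t=\infty$, contradicting the third bullet. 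For the second bullet, from $\eta_t\le 1$ and \cref{eq:gd_dec} one has $\sum_t\eta_t^2\|\nabla\cR(W(t))\|^2\le \sum_t\eta_t\|\nabla\cR(W(t))\|^2\le 2\cR(W(0))$, and Cauchy-Schwarz gives $\sum_t\eta_t\|\nabla\cR(W(t))\|\le \sqrt{(\sum_t\eta_t)\cdot 2\cR(W(0))}$; so if $\sum_t\eta_t<\infty$ then $\sum_t\|W(t+1)-W(t)\|_F<\infty$, making $\{W(t)\}$ Cauchy and convergent to a finite limit, contradicting the first bullet.

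The main obstacle is the second bullet: the step size $\eta_t$ is explicitly designed to shrink with $R_t$, so a priori nothing prevents $\sum_t\eta_t$ from being finite even while weight norms grow. The key insight is to combine the already-summable quantity $\sum_t\eta_t\|\nabla\cR(W(t))\|^2\le 2\cR(W(0))$ with a hypothetical summable $\sum_t\eta_t$ via Cauchy-Schwarz to force $\{W(t)\}$ to converge in norm, thereby contradicting the first bullet; this step has no direct analogue in the gradient flow proof, where step-size divergence is automatic.
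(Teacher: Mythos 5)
Your proof is correct and takes essentially the same route as the paper's: the same descent inequality \cref{eq:gd_dec} (justified, as in the paper, by the per-step increment bound of at most $1$ so that $W(t+1)\in B(R_t)$), the same constant $\epsilon(R)$ recycled from \Cref{lem:gf_unbounded}, the bound $\sum_t\eta_t\|\nR(W(t))\|^2\le 2\cR(W(0))$, and the same Cauchy--Schwarz step for the divergence of $\sum_t\eta_t$. The only differences are organizational: you get the first bullet by contradicting the third bullet via a positive lower bound on $\eta_t$ when the iterates stay in $B(R)$ (the paper routes the same argument through \Cref{lem:gd_escape} and the growth of $R_t$), and you argue the second bullet contrapositively (finite $\sum_t\eta_t$ would make the path length $\sum_t\eta_t\|\nR(W(t))\|$ finite, forcing convergence of $W(t)$), whereas the paper directly notes that this path length is unbounded so the product in Cauchy--Schwarz must diverge.
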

\begin{proof}
    By \Cref{ass:step_size}, we always have that $W(t)\in B(R_t)$. Since $\beta(R_t)=2L^2R_t^{2L-2}(\beta+G)\ge R_t^{L-1}G$, we have for any $1\le k\le L$,
    \begin{align}
        \|W_k(t+1)\|_F & \le\|W_k(t)\|_F+\eta_t\enVert{\frac{\partial\cR}{\partial W_k(t)}}_F \nonumber \\
         & \le\|W_k(t)\|_F+\frac{1}{\beta(R_t)}\enVert{\frac{\partial\cR}{\partial W_k(t)}}_F \nonumber \\
         & \le\|W_k(t)\|_F+\frac{1}{\beta(R_t)}R_t^{L-1}G \nonumber \\
         & \le\|W_k(t)\|_F+1. \label{eq:gd_inc}
    \end{align}
    Moreover, \Cref{lem:gd_escape} shows that $R_t\to\infty$. Since $R_{t+1}=R_t$ as long as $W(t+1)\in B(R_t-1)$, $\max_{1\le k\le L}\|W_k(t)\|_F$ is unbounded.

    It then follows that for any $t$, by Cauchy-Schwarz,
    \begin{align*}
        \del{\sum_{\tau=0}^{t-1}\eta_{\tau}}\del{\sum_{\tau=0}^{t-1}\eta_{\tau}\enVert{\nR\del{W(\tau)}}^2} & \ge\del{\sum_{\tau=0}^{t-1}\eta_{\tau}\enVert{\nR\del{W(\tau)}}}^2\to\infty.
    \end{align*}
    since by \cref{eq:smooth_risk_drop},
    \begin{align*}
        \sum_{\tau=0}^{t-1}\eta_{\tau}\enVert{\nR\del{W(\tau)}}^2\le2\cR\del{W(0)}-2\cR\del{W(t)}\le2\cR\del{W(0)},
    \end{align*}
    we have $\sum_{t=0}^{\infty}\eta_t=\infty$.

    Since under \Cref{ass:smooth,ass:step_size} gradient descent never increases the risk, it can be shown in exactly the same as in the proof of \Cref{lem:gf_unbounded} that, for $W(t)\in B(R)$, $\|\partial\cR/\partial W_1(t)\|_F\ge\epsilon(R)$ for some constant $\epsilon(R)>0$. Invoking \cref{eq:smooth_risk_drop} again, we get that $\sum_{t:W(t)\in B(R)}^{}\eta_t<\infty$.
\end{proof}

The next lemma is an analogy to \Cref{lem:gf_align}.
\begin{lemma}\label{lem:gd_align}
    Under \Cref{ass:loss} and \ref{ass:smooth}, the gradient descent iterates satisfy the following properties:
    \begin{itemize}
        \item For any $1\le k\le L$, $\|W_k\|_F^2-\|W_k\|_2^2\le D+2\cR\del{W(0)}$.
        \item For any $1\le k<L$, $\langle v_{k+1},u_k\rangle^2\ge1-\nicefrac{D+3\cR\del{W(0)}+\|W_{k+1}(0)\|_2^2+\|W_k(0)\|_2^2}{\|W_{k+1}\|_2^2}$.
        \item Suppose $\max_{1\le k\le L}\|W_k\|_F\to\infty$, then $\envert{\left\langle \nicefrac{\wnn}{\prod_{k=1}^{L}\|W_k\|_F}, v_1\right\rangle}\to1$.
    \end{itemize}
\end{lemma}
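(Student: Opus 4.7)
The plan is to port the proof of \Cref{lem:gf_align} into the discrete setting by keeping track of an accumulated second-order error that arises because the two conservation laws $W_{k+1}^\top W_{k+1} - W_{k+1}^\top(0) W_{k+1}(0) = W_k W_k^\top - W_k(0) W_k^\top(0)$ and $\|W_k\|_F^2 - \|W_k(0)\|_F^2 = \|W_{k+1}\|_F^2 - \|W_{k+1}(0)\|_F^2$ no longer hold exactly. The main observation is that the algebraic identity $W_{k+1}^\top(\partial\cR/\partial W_{k+1}) = (\partial\cR/\partial W_k) W_k^\top$ still holds pointwise in $t$, so expanding $W_{k+1}(t{+}1)^\top W_{k+1}(t{+}1)$ and $W_k(t{+}1) W_k(t{+}1)^\top$ via the update rule, the linear-in-$\eta_t$ terms cancel exactly, and one is left only with the quadratic remainder
\[
    A_{k,k+1}(t{+}1) - A_{k,k+1}(t) \;=\; \eta_t^2\sbr{G_k(t) G_k(t)^\top - G_{k+1}(t)^\top G_{k+1}(t)},
\]
where $G_k := \partial\cR/\partial W_k$ and $A_{k,k+1}(t) := W_k(t) W_k(t)^\top - W_{k+1}(t)^\top W_{k+1}(t)$.

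Summing this over $s = 0, \dots, t-1$ controls $\|A_{k,k+1}(t) - A_{k,k+1}(0)\|_2$ by $\sum_s \eta_s^2(\|G_k(s)\|_F^2 + \|G_{k+1}(s)\|_F^2)$, and taking traces yields the analogous additive error in the Frobenius norm conservation law. The key quantitative ingredient is then the smoothness descent inequality \eqref{eq:gd_dec}, which telescopes to
\[
    \sum_{s=0}^{\infty}\eta_s^2\enVert{\nR\del{W(s)}}^2 \;\le\; \sum_{s=0}^{\infty}\eta_s\enVert{\nR\del{W(s)}}^2 \;\le\; 2\cR\del{W(0)}.
\]
Since $\sum_k \|G_k\|_F^2 = \|\nR(W)\|^2$, all of the per-layer error sums can be bundled into a single constant proportional to $\cR(W(0))$.

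With the perturbed conservation laws in hand, the singular value / singular vector manipulations from \Cref{lem:gf_align} carry over verbatim: the inequalities \eqref{eq:svd_1}, \eqref{eq:ind_2}, \eqref{eq:svd_2} and \eqref{eq:svd_3} each pick up an additive $O(\cR(W(0)))$ term coming from the bound on $\|A_{k',k'+1}(t) - A_{k',k'+1}(0)\|_2$ and from the Frobenius discrepancy; telescoping these from layer $k$ to $L$ and substituting into the algebraic derivation of \eqref{eq:dp_prelim}--\eqref{eq:sv_ratio} gives the claimed constants $D + 2\cR(W(0))$ in the first bullet and $D + 3\cR(W(0)) + \|W_{k+1}(0)\|_2^2 + \|W_k(0)\|_2^2$ in the second (modulo a careful accounting of how often each $\|G_k(s)\|_F^2$ contribution is double-counted across the telescoping sum over $k'$).

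The third bullet then follows exactly as in the gradient flow proof: since the first-bullet bound $\|W_k\|_F^2 - \|W_k\|_2^2 \le D + 2\cR(W(0))$ is a \emph{finite} constant, once any one $\|W_k\|_F\to\infty$ (and hence, by the perturbed Frobenius identity together with the $2\cR(W(0))$ error, every $\|W_k\|_F\to\infty$), both $\|W_k\|_2\to\infty$ and $\langle u_k, v_{k+1}\rangle^2 \to 1$, so $W_k/\|W_k\|_F \to u_k v_k^\top$ and the telescoping product argument at the end of the proof of \Cref{lem:gf_align} carries over unchanged. The main obstacle is the first one: getting the constants right requires being careful about which additive error terms come from $A_{k',k'+1}$ and which from the Frobenius identity, and in particular noticing that the linear-in-$\eta_t$ terms cancel exactly so that only the cleanly bounded $\eta_t^2$ sum needs to be tracked.
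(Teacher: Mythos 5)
Your proposal is correct and follows essentially the same route as the paper's proof: the exact cancellation of the linear-in-$\eta_t$ terms via $W_{k+1}^\top(\partial\cR/\partial W_{k+1})=(\partial\cR/\partial W_k)W_k^\top$ is precisely how the paper arrives at its perturbed conservation law (its \cref{eq:gd_spec}, with accumulated terms $P_k,Q_k$), and bounding the accumulated $\eta_t^2\|\nR(W(t))\|^2$ error by $2\cR(W(0))$ through the descent inequality before re-running the singular-vector estimates of \Cref{lem:gf_align} is exactly the paper's argument, yielding the same constants. The third bullet is likewise handled as in the paper, using the (now only approximately conserved) Frobenius-norm differences to propagate $\|W_k\|_F\to\infty$ to all layers.
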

\begin{proof}
    Recall that for any $W$,
    \begin{equation}\label{eq:gd_align_tmp1}
        W_{k+1}^{\top}\frac{\partial\cR}{\partial W_{k+1}}=W_{k+1}^{\top}\cdots W_L^{\top}\nR(\wnn)^{\top}W_1^{\top}\cdots W_k^{\top}=\frac{\partial\cR}{\partial W_k}W_k^{\top}.
    \end{equation}
    For gradient descent iterates, summing \cref{eq:gd_align_tmp1} from $0$ to $t-1$, we get
    \begin{align}
         & \quad W_{k+1}^{\top}(t)W_{k+1}(t)-W_{k+1}^{\top}(0)W_{k+1}(0)+\sum_{\tau=0}^{t-1}\eta_{\tau}^2\del{\frac{\partial\cR}{\partial W_{k+1}(\tau)}}^{\top}\del{\frac{\partial\cR}{\partial W_{k+1}(\tau)}} \nonumber \\
        = & \quad W_k(t)W_k^{\top}(t)-W_k(0)W_k^{\top}(0)+\sum_{\tau=0}^{t-1}\eta_{\tau}^2\del{\frac{\partial\cR}{\partial W_k(\tau)}}\del{\frac{\partial\cR}{\partial W_k(\tau)}}^{\top}. \label{eq:gd_spec}
    \end{align}
    For any $1\le k\le L$ and any $t$, let
    \begin{equation*}
        P_{k}(t)=\sum_{\tau=0}^{t-1}\eta_{\tau}^2\del{\frac{\partial\cR}{\partial W_k(\tau)}}\del{\frac{\partial\cR}{\partial W_k(\tau)}}^{\top},
    \end{equation*}
    and
    \begin{equation*}
        Q_{k}(t)=\sum_{\tau=0}^{t-1}\eta_{\tau}^2\del{\frac{\partial\cR}{\partial W_k(\tau)}}^{\top}\del{\frac{\partial\cR}{\partial W_k(\tau)}}.
    \end{equation*}
    We have $\|P_k(t)\|_2=\|Q_k(t)\|_2\le\tr\del{Q_k(t)}=\tr\del{P_k(t)}$. Moreover, invoking \cref{eq:smooth_risk_drop},
    \begin{align}
        \sum_{k=1}^{L}\tr\del{P_k(t)} & =\sum_{k=1}^{L}\sum_{\tau=0}^{t-1}\eta_{\tau}^2\enVert{\frac{\partial\cR}{\partial W_k(\tau)}}_F^2 \nonumber \\
         & =\sum_{\tau=0}^{t-1}\eta_{\tau}^2\enVert{\nR\del{W(\tau)}}^2 \nonumber \\
         & \le\sum_{\tau=0}^{t-1}\eta_{\tau}\enVert{\nR\del{W(\tau)}}^2 \nonumber \\
         & \le2\cR\del{W(0)}-2\cR\del{W(t)} \nonumber \\
         & \le2\cR\del{W(0)}. \label{eq:gd_grad2}
    \end{align}

    Still let $\sigma_k(t)$, $u_k(t)$ and $v_k(t)$ denote the first singular value, left singular vector and right singular vector of $W_k(t)$. We can then proceed basically in the same way as in the proof of \Cref{lem:gf_align}. For example, \cref{eq:svd_1} becomes
    \begin{align}\label{eq:gd_svd_1}
        \sigma_k^2(t)\ge\sigma_{k+1}^2(t)-\|A_{k,k+1}(t)\|_2-\|P_{k}(t)\|_2\ge\sigma_{k+1}^2(t)-\|A_{k,k+1}(t)\|_2-\tr\del{P_k(t)},
    \end{align}
    while \cref{eq:ind_2} becomes
    \begin{align}\label{eq:gd_ind_1}
        \|W_k(t)\|_F^2=\|W_{k+1}(t)\|_F^2+\|W_k(0)\|_F^2-\|W_{k+1}(0)\|_F^2-\tr\del{P_k(t)}+\tr\del{Q_{k+1}(t)}.
    \end{align}
    Summing \cref{eq:gd_svd_1} and \cref{eq:gd_ind_1} from $k$ to $L-1$, and invoke \cref{eq:gd_grad2}, we get
    \begin{align*}
        \|W_k(t)\|_F^2-\|W_k(t)\|_2^2 &\le D-\tr\del{P_k(t)}+\tr\del{Q_L(t)}+\sum_{k'=k}^{L-1}\tr\del{P_{k'}(t)}\le D+2\cR\del{W(0)}.
    \end{align*}

    To prove singular vectors get aligned, we can still proceed in nearly the same way as in the proof of \Cref{lem:gf_align}. \cref{eq:svd_2} becomes
    \begin{align}\label{eq:gd_svd_2}
        u_k^{\top}W_{k+1}^{\top}W_{k+1}u_k\ge\sigma_k^2-\|W_k(0)\|_2^2-\|Q_{k+1}(t)\|_2,
    \end{align}
    while \cref{eq:svd_3} becomes
    \begin{align}\label{eq:gd_svd_3}
        u_k^{\top}W_{k+1}^{\top}W_{k+1}u_k\le \langle u_k,v_{k+1}\rangle^2\sigma_{k+1}^2+D+2\cR\del{W(0)}.
    \end{align}
    Combining \cref{eq:gd_svd_2} and \cref{eq:gd_svd_3}
    \begin{align}\label{eq:gd_dp_prelim}
        \sigma_k^2\le \langle u_k,v_{k+1}\rangle^2\sigma_{k+1}^2+D+2\cR\del{W(0)}+\|Q_{k+1}(t)\|_2+\|W_k(0)\|_2^2.
    \end{align}
    Similar to \cref{eq:gd_svd_2}, we can get
    \begin{align*}
        \sigma_k^2\ge v_{k+1}^{\top}W_kW_k^{\top}v_{k+1}\ge\sigma_{k+1}^2-\|W_{k+1}(0)\|_2^2-\|P_k(t)\|_2,
    \end{align*}
    and thus \cref{eq:sv_ratio} becomes
    \begin{align}\label{eq:gd_sv_ratio}
        \frac{\sigma_k^2}{\sigma_{k+1}^2}\ge1-\frac{\|W_{k+1}(0)\|_2^2+\|P_k(t)\|_2}{\sigma_{k+1}^2}.
    \end{align}
    Combining \cref{eq:gd_dp_prelim} and \cref{eq:gd_sv_ratio}, we get
    \begin{align*}
        \langle u_k,v_{k+1}\rangle^2\ge1-\frac{D+\|W_k(0)\|_2^2+\|W_{k+1}(0)\|_2^2+3\cR\del{W(0)}}{\sigma_{k+1}^2}.
    \end{align*}

    The final claim of \Cref{lem:gd_align} can be proved in exactly the same way as \Cref{lem:gf_align}.
\end{proof}

\begin{proof}[Proof of \Cref{thm:gd_risk}]
    Summing \cref{eq:gd_ind_1}, we know that for any two different layers $j>k$,
    \begin{align*}
        \|W_k(t)\|_F^2-\|W_j(t)\|_F^2=\|W_k(0)\|_F^2-\|W_j(0)\|_F^2-\tr\del{P_k(t)}+\tr\del{Q_j(t)}.
    \end{align*}
    Recall \cref{eq:gd_grad2}, we know that
    \begin{align}\label{eq:diff_norm2}
        \envert{\del{\|W_k(t)\|_F^2-\|W_j(t)\|_F^2}-\del{\|W_k(0)\|_F^2-\|W_j(0)\|_F^2}}\le2\cR\del{W(0)}.
    \end{align}
    In other words, the difference between the squares of Frobenius norms of any two layers is still bounded.

    The proof then goes in the same way as the proof of \Cref{thm:gf_risk}. Suppose the risk is always above $\epsilon>0$. Then there exists some $c(\epsilon)>0$ such that $\|\nR(\wnn)\|\ge c(\epsilon)$. By \Cref{lem:gd_align}, there exists some $C$ such that if $\min_{1\le k\le L}\|W_k(t)\|_F>C$, $\|W_L(t)\cdots W_2(t)\|\ge C^L/2$. By \cref{eq:diff_norm2} and \Cref{lem:gd_unbounded}, $\sum_{t:\|W_k(t)\|_F\le C\textrm{ for some }k}\eta_t$ is finite. On the other hand, by \Cref{lem:gd_unbounded}, $\sum_{i=0}^{\infty}\eta_t=\infty$, and thus $\sum_{t:\|W_k(t)\|_F>C\textrm{ for all }k}^{}\eta_t=\infty$. Therefore we have, by invoking \cref{eq:smooth_risk_drop},
    \begin{align*}
        2\cR\del{W(0)} & \ge \sum_{t=0}^{\infty}\eta_t\enVert{\cR\del{W(t)}}^2 \\
         & \ge \sum_{t=0}^{\infty}\eta_t\enVert{\frac{\partial\cR}{\partial W_1(t)}}^2 \\
         & \ge c(\epsilon)\frac{C^L}{2}\sum_{t:\|W_k(t)\|_F>C\textrm{ for all }k}^{}\eta_t \\
         & =\infty,
    \end{align*}
    which is a contradiction. Therefore $\cR\del{W(t)}\to0$, and since it has no finite optimum, $\|W_k\|_F\to\infty$. The other results follow from \Cref{lem:gd_unbounded}.
\end{proof}

\begin{proof}[Proof of \Cref{thm:gd_min_norm}]
    Recall that
    \begin{equation*}
        \frac{\partial\cR}{\partial W_1}=W_2^{\top}\cdots W_L^{\top}\nR(\wnn),
    \end{equation*}
    and thus
    \begin{align*}
        \|W_1(t+1)\|_F^2 & =\|W_1(t)\|_F^2-2\eta_t \left\langle W_1(t),\frac{\partial\cR}{\partial W_1(t)}\right\rangle+\eta_t^2\enVert{\frac{\partial\cR}{\partial W_1(t)}}_F^2 \\
         & =\|W_1(t)\|_F^2-2\eta_t \left\langle\wnn(t),\nR\del{\wnn(t)}\right\rangle+\eta_t^2\enVert{\frac{\partial\cR}{\partial W_1(t)}}_F^2.
    \end{align*}
    If $\langle\wnn,z_i\rangle\ge0$ for all $i$, then $\|W_1(t+1)\|_F\ge\|W_1(t)\|_F$.

    Also recall that $\Pi_{\perp}W_1(t)$ denote the projection of rows of $W_1(t)$ onto $\baru^{\perp}$, the orthogonal complement of $\mathrm{span}(\baru)$. We have
    \begin{align}
        \|\Pi_{\perp}W_1(t+1)\|_F^2 & \le\|\Pi_{\perp}W_1(t)\|_F^2-2\eta_t \left\langle\Pi_{\perp}W_1(t),\frac{\partial\cR}{\partial W_1(t)}\right\rangle+\eta_t^2\enVert{\frac{\partial\cR}{\partial W_1(t)}}_F^2 \nonumber \\
         & =\|\Pi_{\perp}W_1(t)\|_F^2-2\eta_t \left\langle\Pi_{\perp}\wnn(t),\nR\del{\wnn(t)}\right\rangle+\eta_t^2\enVert{\frac{\partial\cR}{\partial W_1(t)}}_F^2. \label{eq:gd_mn_tmp1}
    \end{align}
    Invoking \cref{eq:smooth_risk_drop} again gives
    \begin{align}\label{eq:gd_mn_tmp2}
        \eta_t^2\enVert{\frac{\partial\cR}{\partial W_1(t)}}_F^2\le\eta_t\enVert{\nR\del{W(t)}}^2\le2\del{\cR\del{W(t)}-\cR\del{W(t+1)}}.
    \end{align}

    The proof then goes in almost the same way as the proof of \Cref{thm:gf_min_norm}. For any $\epsilon>0$, we can find some large enough time $t_0$, such that for any $t\ge t_0$,
    \begin{enumerate}
        \item $\|\Pi_{\perp}W_1(t)\|_F/\|W_1(t)\|F\ge\epsilon$ implies that $\left\langle\Pi_{\perp}\wnn(t),\nR\del{\wnn(t)}\right\rangle\ge0$.
        \item $\langle \wnn(t),z_i\rangle\ge0$ for all $i$, and thus $\|W_1(t+1)\|_F\ge\|W_1(t)\|_F$.
        \item $\|W_1(t)\|_F\ge\nicefrac{1+\sqrt{2\cR(W(0))}}{\epsilon}$.
    \end{enumerate}

    Suppose at some time $t_1\ge t_0$, $\|\Pi_{\perp}W_1(t_1)\|_F/\|W_1(t_1)\|_F\ge\epsilon$. As long as this still holds, in light of bullet (1) above, \cref{eq:gd_mn_tmp1} and \cref{eq:gd_mn_tmp2}, $\|\Pi_{\perp}W_1\|_F^2$ will increase by at most $2\cR\del{W(t_1)}\le2\cR\del{W(0)}$. On the other hand, $\|W_1\|_F\to\infty$, and thus there exists some $t_2>t_1$ such that $\|\Pi_{\perp}W_1(t_2)\|_F/\|W_1(t_2)\|F<\epsilon$.

    Let $t_3$ denote the smallest time after $t_2$ such that $\|\Pi_{\perp}W_1(t_3)\|_F/\|W_1(t_3)\|_F\ge\epsilon$ (if it exists). Recall that $\|W_1(t+1)\|_F\le\|W_1(t)\|_F+1$ for any $t\ge0$, and $\|W_1(t+1)\|_F\ge\|W_1(t)\|_F$ for any $t\ge t_0$, we have
    \begin{equation*}
        \frac{\|\Pi_{\perp}W_1(t_3)\|_F}{\|W_1(t_3)\|_F}\le \frac{\|\Pi_{\perp}W_1(t_3)\|_F}{\|W_1(t_3-1)\|_F}\le \frac{\|\Pi_{\perp}W_1(t_3-1)\|_F+1}{\|W_1(t_3-1)\|_F}<\epsilon+\frac{1}{\|W_1(t_3-1)\|_F}.
    \end{equation*}
    After $t_3$, $\|\Pi_{\perp}W_1\|_F^2$ will increase by at most $2\cR\del{W(0)}$, and thus $\|\Pi_{\perp}W_1\|_F$ will increase by at most $\sqrt{2\cR\del{W(0)}}$. Therefore, for any $t_4\ge t_3$, as long as $\|\Pi_{\perp}W_1(t_4)\|_F/\|W_1(t_4)\|_F\ge\epsilon$, we have
    \begin{align*}
        \frac{\|\Pi_{\perp}W_1(t_4)\|_F}{\|W_1(t_4)\|_F} & \le \frac{\|\Pi_{\perp}W_1(t_4)\|_F}{\|W_1(t_3)\|_F} \\
         & \le \frac{\|\Pi_{\perp}W_1(t_3)\|_F+\sqrt{2\cR\del{W(0)}}}{\|W_1(t_3)\|_F} \\
         & \le\epsilon+\frac{1}{\|W_1(t_3-1)\|_F}+\frac{\sqrt{2\cR\del{W(0)}}}{\|W_1(t_3)\|_F}\le2\epsilon,
    \end{align*}
    since $\|W_1(t)\|_F\ge\nicefrac{1+\sqrt{2\cR(W(0))}}{\epsilon}$ after $t_0$. In other words,
    \begin{equation*}
        \lim\sup_{t\to\infty}\frac{\|\Pi_{\perp}W_1\|_F}{\|W_1\|_F}\le2\epsilon.
    \end{equation*}
    Since $\epsilon$ is arbitrary, we have
    \begin{equation*}
        \lim\sup_{t\to\infty}\frac{\|\Pi_{\perp}W_1\|_F}{\|W_1\|_F}=0,
    \end{equation*}
    and thus $\lim_{t\to\infty}\envert{\langle v_1,\baru\rangle}=1$.

\end{proof}

\begin{proof}[Proof of \Cref{fact:gd_min_norm_2}]
  The proof is analogous to that of \Cref{fact:gf_min_norm_2},
  except using \Cref{thm:gd_min_norm} in place of \Cref{thm:gf_min_norm}.
\end{proof}

\end{document}